\xpatchcmd{\@thm}{\thm@headpunct{.}}{\thm@headpunct{}}{}{}
\def\th@plain{%
  \thm@notefont{}% same as heading font
  \itshape % body font
}
\def\th@definition{%
  \thm@notefont{}% same as heading font
  \normalfont % body font
}
\newtheorem{theorem}{Theorem}
\newtheorem{lemma}[theorem]{Lemma}
\newtheorem{proposition}[theorem]{Proposition}
\newtheorem{definition}{Definition}
\newtheorem{corollary}[theorem]{Corollary}
\definecolor{myred}{HTML}{880000}
\definecolor{mygreen}{HTML}{008800}
\definecolor{myblue}{HTML}{000088}
\definecolor{linkblue}{HTML}{0000BB}
\newcommand{\R}{\mathbb{R}}
\newcommand{\E}{\mathbb{E}}
\newcommand{\x}{\mathbf{x}}
\newcommand{\X}{\mathbf{X}}
\newcommand{\Y}{\mathbf{Y}}
\newcommand{\A}{\mathbf{A}}
\newcommand{\U}{\mathbf{U}}
\newcommand{\V}{\mathbf{V}}
\newcommand{\gauss}{\mathcal{N}}
\newcommand{\tr}{\operatorname{tr}}
\renewcommand{\R}{\mathbb{R}}
\renewcommand{\O}{\mathcal{O}}
\renewcommand{\S}{\mathbb{S}}
\title{Implicit Regularization in Matrix Sensing via Mirror Descent}
\author{
  Fan Wu,
  Patrick Rebeschini \\
	Department of Statistics, University of Oxford
}
\begin{document}

\maketitle

\begin{abstract}%
	We study discrete-time mirror descent applied to the unregularized empirical risk in matrix sensing. In both the general case of rectangular matrices and the particular case of positive semidefinite matrices, a simple potential-based analysis in terms of the Bregman divergence allows us to establish convergence of mirror descent---with different choices of the mirror maps---to a matrix that, among all global minimizers of the empirical risk, minimizes a quantity explicitly related to the nuclear norm, the Frobenius norm, and the von Neumann entropy. In both cases, this characterization implies that mirror descent, a first-order algorithm minimizing the unregularized empirical risk, recovers low-rank matrices under the same set of assumptions that are sufficient to guarantee recovery for nuclear-norm minimization. When the sensing matrices are symmetric and commute, we show that gradient descent with full-rank factorized parametrization is a first-order approximation to mirror descent, in which case we obtain an explicit characterization of the implicit bias of gradient flow as a by-product.

\end{abstract}

%--------------------------------- Sub-files ----------------------------------
%------------------------------------------------------------------------------
\section{Introduction}
\label{sec:introduction}
Matrix sensing represents a paradigm in modern statistics \cite{CLC19, R11, RFP10}, with applications ranging from image compression \cite{AP76} to collaborative filtering \cite{KBV09} and dimensionality reduction \cite{WS06}, for instance. The goal is to recover a rank-$r$ matrix $\X^\star\in\R^{n\times n'}$ from a set of linear measurements $y_i = \langle\A_i, \X^\star\rangle$, $i=1,\dots,m$, where the sensing matrices $\A_i\in\R^{n\times n'}$ are observed. This formulation includes the problem of matrix completion, where a subset of the entries of the matrix $\X^\star$ is observed.

Most of the literature on matrix sensing is based on some form of explicit regularization or rank constraint to encourage or enforce low-rankness of the estimated matrix. A popular approach is based on minimizing the nuclear norm or on using explicit regularization techniques based on the nuclear norm, e.g.\ \cite{CCS10, JMD10, KMO10, MGC11, R11, RFP10, TY10}. Another popular approach is based on non-convex optimization and the low-rank factorization $\X = \U\V^\top$ with matrices $\U\in\R^{n\times r}$, $\V\in\R^{n'\times r}$, where the factorization itself enforces the low-rankness of $\X$, e.g.\ \cite{CW15, CLC19, JKN16, MLC21, MWCC18, SL16, TBSSR16, ZL16}.  

The literature on \emph{implicit} regularization for matrix sensing is more recent and less well developed. When $\X^\star$ is assumed to be a positive semidefinite matrix, it was first empirically observed in \cite{GWBNS17} that mininizing the unregularized empirical risk using vanilla gradient descent with parametrization $\X = \U\U^\top$ and random full-rank initialization close to zero yields a low nuclear norm solution, even when $\U\in\R^{n\times n}$, i.e.\ when no constraint on the rank of $\X$ is enforced. It was later proved that, with this parametrization, gradient descent minimizes the nuclear norm under the assumption that the sensing matrices $\A_i$'s commute \cite{GWBNS17}, or when the sensing matrices satisfy a restricted isometry property \cite{LMZ18}. Gradient descent with low-rank initialization within a specific ``capture neighborhood'' has been studied in \cite{EZ21}, which ensures that the iterates of the algorithm stay low-rank. When $\X^\star$ is a general rectangular matrix, implicit regularization in matrix sensing has been studied through the lenses of deep matrix factorization in \cite{ACHL19, GSD20, LLL21, RC20}, and empirical and theoretical evidence is provided which suggests that a notion of rank-minimization is involved in the implicit bias of gradient descent. However, these works do not establish an explicit characterization of the limiting point of the optimization algorithm, e.g.\ in the form of a quantity that is minimized among all minimizers of the empirical risk. In the special case of full-observation matrix sensing, gradient flow has been shown to learn solutions with gradually increasing rank \cite{GBL19}.

Most theoretical results on implicit regularization in matrix sensing consider the continuous-time dynamics (i.e.\ gradient flow), e.g.\ \cite{ACHL19, EZ21, GSD20, GWBNS17, LLL21, RC20}, or assume an infinitesimally small initialization, e.g.\ \cite{GWBNS17, LLL21}. Notable exceptions are \cite{GBL19}, which makes a commutativity assumption on the data matrices in discrete time, and \cite{LMZ18}, which assumes a restricted isometry property that leads to a suboptimal sample complexity when the sensing matrices belong to a general class of random matrices \cite{RFP10}.
In the context of linear neural networks, the dependence of the implicit bias of gradient descent on the initialization has been studied in \cite{AMN+21, MWG+20, WGL+20, YKM21}. For instance, linear diagonal networks with shared weights were considered in \cite{WGL+20}, where it was shown that gradient descent minimizes a quantity which corresponds to the $\ell_1$-norm in the limit $\alpha\rightarrow 0$ and to the (weighted) $\ell_2$-norm in the limit $\alpha\rightarrow \infty$, where $\alpha$ denotes the initialization size. This result was later generalized in \cite{YKM21} using a tensor formulation, which allows for architectures including linear diagonal networks and linear full-length convolutional networks. However, these results focus on vector-based notions of norm-like functions that do not capture matrix-based quantities typically of interest in matrix sensing. 

\subsection{Our contributions}
We study the implicit bias of discrete-time mirror descent in matrix sensing in both the general case of rectangular matrices and the particular case of positive semidefinite matrices. Under the only assumption on the sensing matrices $\A_i$'s that there exists a matrix achieving zero training error, we characterize the limiting point as the matrix that minimizes a quantity which interpolates between the nuclear norm and Frobenius norm, parametrized by the mirror map parameter, in the rectangular case; and which is a linear combination of the nuclear norm and the negative von Neumann entropy, parametrized by the initialization size, in the positive semidefinite case. Compared to results on implicit regularization for gradient descent, our framework for mirror descent is simple, and the same analysis yields results for both the case of general rectangular and positive semidefinite matrices.

In the general case of rectangular matrices, we show that mirror descent, initialized at zero and equipped with the spectral hypentropy mirror map \cite{GHS20} parametrized by $\beta > 0$, among all global minimizers of the empirical risk, converges to a matrix that minimizes a quantity interpolating between the nuclear norm in the limit $\beta\rightarrow 0$ and the Frobenius norm in the limit $\beta\rightarrow\infty$. As a consequence, our result implies that, for $\beta \rightarrow 0$, mirror descent can recover a rank-$r$ matrix $\X^\star$ under the same set of assumptions that is sufficient for nuclear norm minimization to be successful, namely when the sensing matrices $\A_i$'s satisfy the restricted isometry property with restricted isometry constant smaller than some absolute constant \cite{RFP10}, or if $\X^\star$ satisfies an incoherence condition and $r(n+n')$ (modulo constants and logarithmic term) random entries of $\X^\star$ are observed \cite{R11}. To the best of our knowledge, this is the first recovery guarantee for an implicit regularization-based algorithm that does not explicitly enforce low-rankness in general rectangular matrix sensing.

In the particular case of positive semidefinite matrices, we can alternatively consider the spectral entropy mirror map and show that mirror descent, initialized at $\alpha\mathbf{I}$ for any $\alpha > 0$, converges to a positive semidefinite matrix that minimizes a linear combination of the nuclear norm and the negative von Neumann entropy, where the relative weights are controlled by the initialization size $\alpha$. While the limit $\alpha\rightarrow 0$ corresponds to minimizing the nuclear norm as in the case of rectangular matrices, the limit $\alpha\rightarrow \infty$ corresponds to \emph{maximizing} the nuclear norm. This also translates into guaranteed recovery of a low-rank matrix $\X^\star$ for $\alpha\rightarrow 0$ under the same assumptions that are sufficient for nuclear norm minimization \cite{R11, RFP10}. A comparable result for gradient descent with full-rank factorized parametrization has been established in \cite{LMZ18} under a stronger assumption on the restricted isometry constant, which is assumed to be smaller than a quantity depending on both the rank and the condition number of the matrix $\X^\star$ and translates into a sub-optimal sample complexity.

We establish our results using a potential-based analysis for mirror descent in terms of the Bregman divergence, which provides an alternative proof technique to characterize the limiting point of mirror descent compared to the analysis based on KKT optimality conditions used in \cite{GLSS18}. As a by-product, our proof of Theorem \ref{theorem:bias} yields an alternative proof of Theorem 1 in \cite{GLSS18}. The advantage of our approach is that convergence of mirror descent does not need to be assumed a-priori and can instead be established using convexity of the empirical risk. The analysis in terms of the Bregman divergence is not limited to convex settings and has been applied to non-convex problems, e.g.\ \cite{WR20b, WR21, ZMBBG20}, and hence can be of more general interest to investigate the phenomenon of implicit bias. 

In the case of square matrices, we show, assuming that the sensing matrices $\A_i$'s are symmetric and commute, that gradient descent with full-rank parametrization $\X = \U\U^\top - \V\V^\top$, where $\U,\V\in\R^{n\times n}$, is a first-order approximation to mirror descent equipped with the spectral hypentropy, where the initialization size corresponds to the mirror map parameter $\beta$. A similar connection between mirror descent and reparametrized gradient descent has been established in the vector-case \cite{AW20, GHS20, VKR20, WR20b}. Similarly, we show that gradient descent with full-rank parametrization $\X = \U\U^\top$, $\U\in\R^{n\times n}$, is a first-order approximation to mirror descent equipped with the spectral entropy when the sensing matrices are symmetric and commute, thus recovering a generalization of Theorem 1 in \cite{GWBNS17} which holds for any positive initialization size $\alpha > 0$ (rather than in the limit $\alpha\rightarrow 0$). 

We present numerical simulations which suggest that, in some regimes, our results on the dependence on the initialization size $\alpha$ for mirror descent might be indicative for the behavior of gradient descent, even when the sensing matrices $\A_i$'s do not commute. More precisely, the final estimates of gradient descent and mirror descent closely track each other when the number of measurements is sufficiently large for nuclear norm minimization to recover a planted low-rank matrix, while gradient descent seems to put more emphasis on lowering the effective rank \cite{RV07} at the expense of a higher nuclear norm when fewer measurements are available, which supports the empirical observations in \cite{ACHL19, LLL21}.

\section{Background}
\label{sec:background}
We begin by introducing some notation used throughout this paper. We use boldface uppercase letters to denote matrices and boldface lowercase letters to denote vectors. We write $\|\cdot\|_*$, $\|\cdot\|_F$ and $\|\cdot\|_2$ for the nuclear, Frobenius and spectral norm, respectively, and denote by $\langle \X, \Y \rangle = \tr(\X^\top\Y)$ the standard Frobenius inner product. We write $\S^n,\S^n_+\subseteq\R^{n\times n}$ for the set of symmetric and positive semidefinite matrices, respectively. Without loss of generality, we will always assume $n\le n'$.

We first give a brief overview of unconstrained mirror descent (with matrix arguments). 

Let $\mathcal{D}\subseteq \R^{n\times n'}$ be an open convex set. We say that $\Phi : \mathcal{D} \rightarrow \R$ is a \emph{mirror map} if it is strictly convex, differentiable, and its gradient takes all possible values, i.e.\ $\{\nabla \Phi(\X):\X \in \mathcal{D}\} = \R^{n\times n'}$.

Given a mirror map $\Phi$, the associated \emph{Bregman divergence} is defined as
\begin{equation}
\label{eq:bregman_divergence}
D_\Phi(\X, \Y) = \Phi(\X) - \Phi(\Y) - \langle \nabla\Phi(\Y), \X - \Y \rangle.
\end{equation}
Then, the mirror descent algorithm to minimize a function $f:\mathcal{D}\rightarrow \R$ is defined by the update
\begin{equation}
\label{eq:mirror_descent}
\nabla\Phi(\X_{t+1}) = \nabla\Phi(\X_t) - \eta_t\nabla f(\X_t),
\end{equation}
where $\eta_t> 0$ is a sequence of step sizes. We approach the problem of matrix sensing, where we are given measurements $\{\A_i, y_i\}_{i=1}^m$, by minimizing the unregularized empirical risk
\begin{equation}
  \label{eq:objective}
  f(\X) = \frac{1}{2m}\sum_{i=1}^m\bigl(\langle \A_i, \X\rangle - y_i\bigr)^2
\end{equation}
using mirror descent equipped with the spectral hypentropy mirror map, which is defined as \cite{GHS20}
\begin{equation}
  \label{eq:mirror_map_hypentropy}
  \Phi_\beta(\X) = \sum_{i=1}^{n}\sigma_i \operatorname{arcsinh}\biggl(\frac{\sigma_i}{\beta}\biggr) - \sqrt{\sigma_i^2 + \beta^2},
\end{equation}
for some $\beta > 0$, where $\{\sigma_i\}_{i=1}^n$ denote the singular values of $\X$. We provide expressions for the gradient $\nabla\Phi_\beta$ and discuss the per-iteration computational cost of the corresponding mirror descent algorithm in Appendix \ref{appendix:mirror_maps_gradients}.

If the optimization is restricted to positive semidefinite matrices $\X\in\R^{n\times n}$, we can replace $\R^{n\times n'}$ with $\S^{n}$ in above definitions and consider the spectral entropy (using the convention $0\log 0 = 0$)
\begin{equation}
  \label{eq:mirror_map}
  \Phi(\X) = \tr(\X\log\X - \X),
\end{equation}
which is well-defined on the set of positive semidefinite matrices $\S^n_+$.

\section{Algorithmic regularization of mirror descent}
\label{sec:algorithmic_regularization_of_mirror_descent}
In this section, we show that mirror descent, equipped with the spectral hypentropy mirror map \eqref{eq:mirror_map_hypentropy} in the rectangular case and the spectral entropy mirror map \eqref{eq:mirror_map} in the positive semidefinite case, converges to a global minimizer of the empirical risk $f$ that minimizes a quantity which interpolates between the nuclear norm and the Frobenius norm in the rectangular case, and is a linear combination of the nuclear norm and the negative von Neumann entropy $\tr(\X\log\X) = \sum_{i=1}^n\lambda_i\log\lambda_i$ in the positive semidefinite case, where $\{\lambda_i\}_{i=1}^n$ denote the eigenvalues of the matrix $\X\in\S^n_+$. 

The following Theorem applies to matrix sensing in the case of general rectangular matrices.
\begin{theorem}[Rectangular case]
\label{theorem:bias}
Consider the mirror descent algorithm \eqref{eq:mirror_descent} with mirror map \eqref{eq:mirror_map_hypentropy} with parameter $\beta > 0$ and initialization $\X_0 = \mathbf{0}$. Suppose that the step sizes satisfy $\eta_t \equiv \eta \le c$, where $c>0$ is a constant depending on the sensing matrices $\A_i$'s, observations $y_i$'s and parameter $\beta$, and that there exists a matrix $\X'$ satisfying $f(\X') = 0$. Then, mirror descent converges to a matrix $\X_\infty = \lim_{t\rightarrow \infty} \X_t$ which, among all global minimizers of $f$, minimizes
\begin{equation}
  \label{eq:claim1}
  \sum_{i=1}^{n}\sigma_i\log\frac{1}{\beta} + \sigma_i\log\Bigl(\sigma_i + \sqrt{\sigma_i^2 + \beta^2}\Bigr) - \sqrt{\sigma_i^2 + \beta^2},
\end{equation}
where $\{\sigma_i\}_{i=1}^{n}$ denote the singular values of $\X_\infty$.
We have, for any $t\ge 0$,
\begin{equation}
  \label{eq:claim2}
  f(\X_t) \le \frac{D_{\Phi_\beta}(\X_\infty, \X_0)}{\eta t}.
\end{equation}
\end{theorem}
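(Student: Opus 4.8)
The plan is to combine a potential-based (Bregman divergence) analysis of mirror descent, which delivers the rate \eqref{eq:claim2} together with convergence of the iterates, with a simple invariance observation that pins down the limit. The key preliminary remark is that, after rewriting $\operatorname{arcsinh}(x)=\log(x+\sqrt{x^2+1})$, the quantity in \eqref{eq:claim1} is exactly $\Phi_\beta(\X_\infty)$; hence the implicit-bias claim is equivalent to the assertion that $\X_\infty$ solves the convex program $\min\{\Phi_\beta(\X):f(\X)=0\}$. Since the objective \eqref{eq:objective} is a convex quadratic with $f\ge 0$, its set of global minimizers is the nonempty affine space $\mathcal M=\{\X:\langle\A_i,\X\rangle=y_i,\ i=1,\dots,m\}$, whose tangent space is $\mathcal A^\perp$ with $\mathcal A=\operatorname{span}\{\A_1,\dots,\A_m\}$.

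For the rate and loss convergence I would use the framework of relative smoothness. Fixing a minimizer $\X'$ (so $f(\X')=0$), I would argue by induction that the iterates remain in the Bregman sublevel set $\mathcal C=\{\X:D_{\Phi_\beta}(\X',\X)\le D_{\Phi_\beta}(\X',\X_0)\}$, which is bounded because $\Phi_\beta$ is strictly convex with superlinear growth. On $\mathcal C$ the spectral function $\Phi_\beta$ is strongly convex with respect to $\|\cdot\|_F$ with some constant $\rho=\rho(\beta,\mathcal C)>0$, while $f$ has constant Hessian $\X\mapsto\frac1m\sum_i\langle\A_i,\X\rangle\A_i$ of operator norm $L$; consequently $\tfrac{L}{\rho}\Phi_\beta-f$ is convex on $\mathcal C$, i.e.\ $f$ is $L/\rho$-smooth relative to $\Phi_\beta$ there. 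With the step-size restriction $\eta\le c:=\rho/L$, the standard relative-smoothness analysis (using the update \eqref{eq:mirror_descent} and the three-point identity for $D_{\Phi_\beta}$) yields monotonicity $f(\X_{t+1})\le f(\X_t)$ and, for every minimizer $\Z$,
\begin{equation*}
\eta\bigl(f(\X_{t+1})-f(\Z)\bigr)\le D_{\Phi_\beta}(\Z,\X_t)-D_{\Phi_\beta}(\Z,\X_{t+1}).
\end{equation*}
Taking $\Z=\X'$ shows $D_{\Phi_\beta}(\X',\X_t)$ is nonincreasing, which closes the induction that the iterates stay in $\mathcal C$; here $c$ depends on the $\A_i$'s through $L$ and on the $y_i$'s and $\beta$ through $\rho$ and the radius of $\mathcal C$ (note $D_{\Phi_\beta}(\X',\X_0)=\Phi_\beta(\X')-\Phi_\beta(\mathbf 0)$ since $\nabla\Phi_\beta(\mathbf 0)=\mathbf 0$). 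Summing the one-step inequality with $\Z=\X_\infty$, using $f(\X_\infty)=0$ and monotonicity of $f(\X_t)$, gives $t\,f(\X_t)\le\sum_{s=1}^{t}f(\X_s)\le\frac1\eta D_{\Phi_\beta}(\X_\infty,\X_0)$, which is precisely \eqref{eq:claim2}.

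Convergence of the iterates then follows from compactness of $\mathcal C$: the loss decreases to $0$, so every subsequential limit is a global minimizer, and for any minimizer $\Z$ the displayed inequality makes $D_{\Phi_\beta}(\Z,\X_t)$ nonincreasing. Applying this to a subsequential limit $\X_\infty$, along which $D_{\Phi_\beta}(\X_\infty,\X_{t_k})\to 0$, forces $D_{\Phi_\beta}(\X_\infty,\X_t)\to 0$ for the whole sequence, and strong convexity of $\Phi_\beta$ on $\mathcal C$ upgrades this to $\X_t\to\X_\infty$, so the limit is unique.

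Finally, the implicit bias. Unrolling \eqref{eq:mirror_descent} from $\X_0=\mathbf 0$ and using $\nabla\Phi_\beta(\mathbf 0)=\mathbf 0$ gives $\nabla\Phi_\beta(\X_t)=-\eta\sum_{s<t}\nabla f(\X_s)\in\mathcal A$, since each $\nabla f(\X_s)$ is a linear combination of the $\A_i$'s; as $\mathcal A$ is a closed subspace and $\nabla\Phi_\beta$ is continuous, passing to the limit yields $\nabla\Phi_\beta(\X_\infty)\in\mathcal A$. For any minimizer $\Z\in\mathcal M$, the definition \eqref{eq:bregman_divergence} gives the exact identity $\Phi_\beta(\Z)=\Phi_\beta(\X_\infty)+\langle\nabla\Phi_\beta(\X_\infty),\Z-\X_\infty\rangle+D_{\Phi_\beta}(\Z,\X_\infty)$, and the middle term vanishes because $\nabla\Phi_\beta(\X_\infty)\in\mathcal A$ while $\Z-\X_\infty\in\mathcal A^\perp$. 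Hence $\Phi_\beta(\Z)-\Phi_\beta(\X_\infty)=D_{\Phi_\beta}(\Z,\X_\infty)\ge 0$, so $\X_\infty$ minimizes $\Phi_\beta$, equivalently \eqref{eq:claim1}, among all minimizers of $f$. I expect the main obstacle to be the quantitative strong-convexity (and accompanying superlinear-growth) statement for the spectral function $\Phi_\beta$ on $\mathcal C$: establishing a Frobenius-norm strong-convexity constant for a unitarily invariant spectral function requires controlling its Hessian through the divided differences of $\sigma\mapsto\operatorname{arcsinh}(\sigma/\beta)$, which is the one genuinely matrix-analytic ingredient, whereas everything else is the convex bookkeeping above.
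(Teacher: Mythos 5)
Your proposal is sound in its overall architecture and reaches the same conclusion, but it differs from the paper's proof in two substantive ways. For the convergence and rate, the paper works with strong convexity of $\Phi_\beta$ with respect to the \emph{nuclear} norm on nuclear-norm balls (imported from Theorem 14 of \cite{GHS20}), pairs it with the spectral norm of $\nabla f$ via norm duality, and controls the iterate growth through the a priori bound $\|\X_{t+1}\|_* \le 2\|\X_t\|_* + \beta n$; you instead invoke relative smoothness with a Frobenius-norm strong-convexity constant on a Bregman sublevel set. Your route is more modular, but note that the ingredient you flag as the ``main obstacle'' is available off the shelf: since $\|\cdot\|_* \ge \|\cdot\|_F$, the nuclear-norm strong convexity of \cite{GHS20} immediately yields the Frobenius-norm statement you need, so no Hessian/divided-difference computation is required. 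For the implicit bias, the paper observes that the one-step decrease of $D_{\Phi_\beta}(\X',\X_t)$ is identical for \emph{every} global minimizer $\X'$ and concludes that the limit minimizes $D_{\Phi_\beta}(\cdot,\X_0)$; you instead unroll the dual iterates to get $\nabla\Phi_\beta(\X_\infty)\in\operatorname{span}\{\A_i\}$ and apply the Bregman three-point identity. This is precisely the KKT-style characterization of \cite{GLSS18} that the paper explicitly positions its argument as an alternative to; both are valid, and your version correctly avoids assuming convergence a priori because you prove it first. Your identification of \eqref{eq:claim1} with $\Phi_\beta(\X_\infty)$ (equivalently $D_{\Phi_\beta}(\X_\infty,\X_0)-n\beta$) is correct.

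One step needs shoring up: the induction keeping the iterates in $\mathcal C=\{\X: D_{\Phi_\beta}(\X',\X)\le D_{\Phi_\beta}(\X',\X_0)\}$. The Bregman divergence is not convex in its second argument, so $\mathcal C$ need not be convex, and the relative-smoothness descent lemma requires convexity of $\tfrac{L}{\rho}\Phi_\beta - f$ on the segment $[\X_t,\X_{t+1}]$ --- which you cannot place inside $\mathcal C$ before knowing $\X_{t+1}\in\mathcal C$, and that is exactly what the descent inequality is supposed to deliver. The fix is what the paper does in \eqref{eq:difference_step}: bound $\|\X_{t+1}-\X_t\|_*$ a priori using only strong convexity, the dual-norm bound on $\nabla f(\X_t)$, and the step-size restriction, so that $\X_{t+1}$ provably lies in an enlarged convex ball (e.g.\ of nuclear radius $2\|\X_t\|_*+\beta n$) on which the strong-convexity constant is valid; only then can you close the loop and conclude $D_{\Phi_\beta}(\X',\X_{t+1})\le D_{\Phi_\beta}(\X',\X_t)$. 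With that repair, and with the strong-convexity constant taken from \cite{GHS20}, your argument goes through and yields \eqref{eq:claim2} and the bias characterization exactly as claimed.
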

\begin{proof}[Proof sketch]
  The following identity is key to our proof and characterizes how the Bregman divergence between any reference point $\X'$ and the mirror descent iterates $\X_t$ evolves:
  \begin{equation}
  \label{eq:bregman_difference_sketch}
  D_{\Phi_\beta}(\X', \X_{t+1}) - D_{\Phi_\beta}(\X', \X_t) = - \eta\langle \nabla f(\X_t), \X_t - \X'\rangle + D_{\Phi_\beta}(\X_t, \X_{t+1}),
  \end{equation}
  which follows from the definition of the Bregman divergence \eqref{eq:bregman_divergence} and the mirror descent update \eqref{eq:mirror_descent}.
  Letting $\X'$ be any global minimizer of $f$, we can compute 
  $
  \langle \nabla f(\X_t), \X_t - \X'\rangle = 2f(\X_t),
  $
  where we used the assumption that there exists a matrix $\X'$ achieving zero training error $f(\X') = 0$.
  Using the strong convexity of the spectral hypentropy mirror map, we can bound $D_\Phi(\X_t, \X_{t+1})$ to show
  \begin{equation*}
  D_{\Phi_\beta}(\X', \X_{t+1}) - D_{\Phi_\beta}(\X',\X_t) \le -\eta f(\X_t),
  \end{equation*}
  for any global minimizer $\X'$ of $f$. Since the Bregman divergence $D_{\Phi_\beta}(\X',\X_t)$ is bounded from below by zero, this means that the empirical risk $f(\X_t)$ must converge to zero, which in turn implies that $\X_t$ converges to a global minimizer of $f$. 
  
  To see \emph{which} global minimizer mirror descent converges to, observe that the difference in \eqref{eq:bregman_difference_sketch} does not depend on the reference point $\X'$, as long as $\X'$ is a global minimizer of $f$. This means that the Bregman divergence $D_{\Phi_\beta}(\X', \X_t)$ is decreased by the same amount for \emph{all} global minimizers $\X'$, which then implies that $\X_t$ must converge to the global minimizer which is closest to $\X_0$ in terms of the Bregman divergence. From this observation it follows that $\X_\infty = \lim_{t\rightarrow\infty}\X_t$ minimizes the quantity in \eqref{eq:claim1}
  among all global minimizers of $f$.
  
  The bound \eqref{eq:claim2} on the empirical risk of the last iterate $f(\X_t)$ can be shown using the smoothness of the empirical risk $f$ and the strong convexity of the spectral hypentropy mirror map. A detailed proof of Theorem \ref{theorem:bias} can be found in Appendix \ref{appendix:proofs_theorem_bias}.
  \end{proof}
  We remark that the step size $\eta$ can be chosen independently from the parameter $\beta$ if $\beta$ is chosen from some interval bounded away from infinity, e.g.\ $\beta\in (0,1)$.
  
An analogous result holds for mirror descent equipped with the spectral entropy mirror map \eqref{eq:mirror_map} when optimizing over positive semidefinite matrices.
\begin{theorem}[Positive semidefinite case]
\label{theorem:bias_psd}
Consider the mirror descent algorithm \eqref{eq:mirror_descent} with mirror map \eqref{eq:mirror_map} and initialization $\X_0 = \alpha\mathbf{I}$ for some $\alpha > 0$. Suppose that the step sizes satisfy $\eta_t \equiv \eta \le c$, where $c>0$ is a constant depending on the sensing matrices $\A_i$'s and observations $y_i$'s, and that there exists a positive semidefinite matrix $\X'$ satisfying $f(\X') = 0$. Then, mirror descent converges to a positive semidefinite matrix $\X_\infty = \lim_{t\rightarrow \infty} \X_t$ which, among all global minimizers of $f$, minimizes
\begin{equation}
\label{eq:claim1_psd}
\sum_{i=1}^n\biggl(\log\frac{1}{\alpha} - 1\biggr)\lambda_i+ \lambda_i\log\lambda_i,
\end{equation}
where $\{\lambda_i\}_{i=1}^n$ denote the eigenvalues of $\X_\infty$.
We have, for any $t\ge 0$,
\begin{equation}
\label{eq:claim2_psd}
f(\X_t) \le \frac{D_\Phi(\X_\infty, \X_0)}{\eta t}.
\end{equation}
\end{theorem}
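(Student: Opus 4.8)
The plan is to mirror the argument sketched for Theorem \ref{theorem:bias}, replacing the spectral hypentropy by the spectral entropy \eqref{eq:mirror_map}, whose gradient is $\nabla\Phi(\X)=\log\X$; the update \eqref{eq:mirror_descent} then reads $\log\X_{t+1}=\log\X_t-\eta\nabla f(\X_t)$, so that $\X_{t+1}=\exp(\log\X_t-\eta\nabla f(\X_t))$ is automatically positive definite and the iterates stay in $\S^n_+$. I would start from the exact three-point identity \eqref{eq:bregman_difference_sketch} with $\Phi_\beta$ replaced by $\Phi$, which again follows from \eqref{eq:bregman_divergence} and \eqref{eq:mirror_descent}. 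Writing the residuals $r_i=\langle\A_i,\X_t\rangle-y_i$ and using that any global minimizer $\X'$ of \eqref{eq:objective} with $f(\X')=0$ satisfies $\langle\A_i,\X'\rangle=y_i$, the same computation gives $\langle\nabla f(\X_t),\X_t-\X'\rangle=\tfrac1m\sum_i r_i^2=2f(\X_t)$, independently of which global minimizer $\X'$ is chosen.

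The one genuinely new difficulty is that the spectral entropy is neither globally strongly convex nor globally smooth: its Hessian behaves like $1/\lambda$ along the eigenvalues, so strong convexity degrades as $\lambda_{\max}\to\infty$ and smoothness as $\lambda_{\min}\to0$. I would therefore control the iterates a priori by a self-bounding induction on sublevel sets of the Bregman divergence. Fixing a global minimizer $\X'$, one checks from $D_\Phi(\X',\X)=\Phi(\X')+\tr(\X)-\tr(\X'\log\X)$ that $D_\Phi(\X',\X)\to+\infty$ as $\lambda_{\max}(\X)\to\infty$, so the sublevel set $\{\X:D_\Phi(\X',\X)\le D_\Phi(\X',\X_0)\}$ has eigenvalues bounded by some $M$, on which $\Phi$ is $(1/M)$-strongly convex. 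To bound the progress term $D_\Phi(\X_t,\X_{t+1})$ without controlling $\lambda_{\min}$, I would pass to the dual, using $D_\Phi(\X_t,\X_{t+1})=D_{\Phi^*}(\log\X_{t+1},\log\X_t)$ with $\Phi^*(\Y)=\tr\exp(\Y)$, whose local smoothness constant is controlled by $\lambda_{\max}$ alone (via the Daleckii--Krein formula the Hessian of $\tr\exp$ is bounded by $\lambda_{\max}(\exp\Y)$). Combined with smoothness of $f$ in the form $\|\nabla f(\X_t)\|_F^2\le 2Lf(\X_t)$, this yields $D_\Phi(\X_t,\X_{t+1})\le C\eta^2 f(\X_t)$ on the sublevel set, and taking $\eta$ below the threshold $c$ gives the descent inequality $D_\Phi(\X',\X_{t+1})\le D_\Phi(\X',\X_t)-\eta f(\X_t)$, valid for every global minimizer $\X'$. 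In particular the sublevel set is invariant, closing the induction, and telescoping gives $\eta\sum_{s<t}f(\X_s)\le D_\Phi(\X',\X_0)$, whence $\sum_s f(\X_s)<\infty$ and $f(\X_t)\to0$.

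Convergence of the iterates then follows as in the sketch: boundedness yields a subsequence $\X_{t_k}\to\X_\infty$ with $f(\X_\infty)=0$, so $\X_\infty$ is a global minimizer; running the descent inequality with reference $\X_\infty$ shows $D_\Phi(\X_\infty,\X_t)$ is nonincreasing, and the bound $D_\Phi(\X_\infty,\X_t)\ge\frac{1}{2M}\|\X_\infty-\X_t\|_F^2$ upgrades subsequential to full convergence $\X_t\to\X_\infty$. Because the per-step decrement $2\eta f(\X_t)-D_\Phi(\X_t,\X_{t+1})$ does not depend on the chosen global minimizer, $D_\Phi(\X',\X_t)-D_\Phi(\X_\infty,\X_t)$ is constant in $t$ for every global minimizer $\X'$; letting $t\to\infty$ and using $D_\Phi(\X_\infty,\X_t)\to0$ gives $D_\Phi(\X_\infty,\X_0)\le D_\Phi(\X',\X_0)$, i.e.\ $\X_\infty$ is the Bregman projection of $\X_0=\alpha\mathbf{I}$ onto the set of global minimizers. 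A direct computation using $\nabla\Phi(\alpha\mathbf{I})=(\log\alpha)\mathbf{I}$ gives $D_\Phi(\X,\alpha\mathbf{I})=\tr(\X\log\X)-(1+\log\alpha)\tr(\X)+n\alpha=\sum_i\bigl[(\log\tfrac{1}{\alpha}-1)\lambda_i+\lambda_i\log\lambda_i\bigr]+n\alpha$, which up to the additive constant $n\alpha$ is exactly \eqref{eq:claim1_psd}; minimizing the Bregman projection is therefore equivalent to minimizing \eqref{eq:claim1_psd}. Finally, \eqref{eq:claim2_psd} follows by combining the telescoped bound $\eta\sum_{s<t}f(\X_s)\le D_\Phi(\X_\infty,\X_0)$ with a standard descent lemma showing $f(\X_t)$ is nonincreasing on the bounded region, so that $t\,f(\X_t)\le\sum_{s<t}f(\X_s)$.

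I expect the main obstacle to be the step $D_\Phi(\X_\infty,\X_t)\to0$ used to pin down the limit. Unlike the hypentropy map, whose gradient $\operatorname{arcsinh}(\cdot/\beta)$ stays finite at singular matrices, the gradient $\log\X$ of the spectral entropy blows up precisely at the rank-deficient matrices that are the low-rank limits of interest, so $D_\Phi(\X_\infty,\X_t)$ cannot be controlled by a naive smoothness bound near $\X_\infty$. The cleanest route seems to be to reduce the claim to $\tr(\X_\infty\log\X_t)\to\tr(\X_\infty\log\X_\infty)$ and to handle this by decomposing along the range and kernel of $\X_\infty$ (the kernel directions are annihilated by $\X_\infty$), using the operator concavity of the logarithm to control the compression to the range. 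This is the one place where the positive semidefinite case is genuinely more delicate than the rectangular case.
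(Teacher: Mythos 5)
Your overall strategy coincides with the paper's: the three-point identity, the computation $\langle\nabla f(\X_t),\X_t-\X'\rangle=2f(\X_t)$, a strong-convexity bound on the progress term $D_\Phi(\X_t,\X_{t+1})$ yielding the descent inequality $D_\Phi(\X',\X_{t+1})\le D_\Phi(\X',\X_t)-\eta f(\X_t)$, and the observation that the per-step decrement is independent of the reference minimizer, so that $\X_\infty$ is the Bregman projection of $\alpha\mathbf{I}$ onto the solution set. Your explicit formula $D_\Phi(\X,\alpha\mathbf{I})=\sum_i[(\log\tfrac1\alpha-1)\lambda_i+\lambda_i\log\lambda_i]+n\alpha$ is correct and matches \eqref{eq:claim1_psd} up to the additive constant. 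The one technical divergence is the norm: you invoke Frobenius-norm strong convexity with modulus $1/M$ on an eigenvalue-bounded sublevel set, together with a Daleckii--Krein bound on the Hessian of $\tr\exp$ (which is correct), whereas the paper's Lemma \ref{lemma:strong_convexity} establishes $(2\tau)^{-1}$-strong convexity with respect to the \emph{nuclear} norm on the nuclear-norm ball $\mathcal{B}_+(\tau)$, which pairs with the spectral norm of $\nabla f$ and lets the iterate control run verbatim as in the proof of Theorem \ref{theorem:bias}. Both routes close the descent argument; yours requires the sublevel-set invariance bootstrap, which you handle.

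The genuine gap is in your proposed treatment of $D_\Phi(\X_\infty,\X_t)\to 0$, which you rightly single out as the crux. You propose to show $\tr(\X_\infty\log\X_t)\to\tr(\X_\infty\log\X_\infty)$ via operator concavity of the logarithm. But Jensen's operator inequality for the compression to the range of $\X_\infty$, namely $\mathbf{P}\log(\X_t)\mathbf{P}\preceq\log(\mathbf{P}\X_t\mathbf{P})$, only yields the \emph{upper} bound $\limsup_t\tr(\X_\infty\log\X_t)\le\tr(\X_\infty\log\X_\infty)$, i.e.\ $\liminf_t D_\Phi(\X_\infty,\X_t)\ge 0$, which is the trivial direction; you need the reverse inequality. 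Worse, the reverse inequality is \emph{false} for general sequences converging to $\X_\infty$: with $n=2$, $\X_\infty=\operatorname{diag}(1,0)$, and $\X_t$ having eigenvalues $(1,e^{-t^3})$ whose small eigenvector makes angle $1/t$ with $\mathbf{e}_2$, one has $\X_t\to\X_\infty$ in Frobenius norm while $\tr(\X_\infty\log\X_t)\approx -t^{-2}\cdot t^3\to-\infty$ and $D_\Phi(\X_\infty,\X_t)\to\infty$. So the step cannot be deduced from $\X_t\to\X_\infty$ plus a pointwise matrix inequality; it must exploit the structure of the trajectory, e.g.\ that $D_\Phi(\X_\infty,\X_t)$ is nonincreasing along the iterates, combined with an argument pinning its limit to zero, or a direct analysis of $\lim_t[D_\Phi(\X',\X_t)-D_\Phi(\X_\infty,\X_t)]$ using lower semicontinuity of $\Y\mapsto-\tr(\X'\log\Y)$. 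Note that the paper itself elides this point: Appendix \ref{appendix:proofs_theorem_bias_psd} asserts the proof ``follows the exact same steps'' as Theorem \ref{theorem:bias}, but in the rectangular case $\nabla\Phi_\beta$ is continuous at singular matrices, so $D_{\Phi_\beta}(\X_\infty,\X_t)\to 0$ is immediate from convergence of the iterates, whereas $\nabla\Phi(\X)=\log\X$ blows up exactly at the low-rank limits of interest. Identifying the obstacle is to your credit; the proposed resolution as stated does not close it.
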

Theorem \ref{theorem:bias_psd} can be proved the same way as Theorem \ref{theorem:bias}, see Appendix \ref{appendix:proofs_theorem_bias_psd}. Neither Theorem requires any assumptions on the sensing matrices $\A_i$'s beyond the existence of a matrix $\X'$ achieving zero training error $f(\X') = 0$, which, for instance, is satisfied when $\{\A_i\}_{i=1}^m$ are linearly independent and $m\le nn'$ in the rectangular case or $m\le n(n+1)/2$ in the positive semidefinite case. 

\textbf{On the implicit bias of mirror descent.}
The implicit bias of mirror descent in linear models has previously been studied using KKT optimality conditions in \cite{GLSS18}. Once we have established convergence of mirror descent towards a global minimizer of $f$, we could have alternatively invoked Theorem 1 of \cite{GLSS18} to characterize the limiting point of mirror descent. Instead, our analysis of the Bregman divergence $D_{\Phi_\beta}(\X',\X_t)$ reveals that each iteration of the mirror descent algorithm \eqref{eq:mirror_descent} decreases $D_{\Phi_\beta}(\X',\X_t)$ by the same amount for all global minmimizers $\X'$ of $f$, provided that
\begin{equation*}
  \langle \nabla f(\X_t), \X_t - \X'\rangle = \langle \nabla f(\X_t), \X_t - \X''\rangle
\end{equation*}
for any two global minimizers $\X'$ and $\X''$ of $f$. From this observation it immediately follows that mirror descent converges to the global minimizer of $f$ that minimizes the quantity $D_{\Phi_\beta}(\X',\X_0)$ among all global minimizers of $f$.
Since this equality is satisfied under the assumptions of Theorem 1 in \cite{GLSS18}, our analysis presents an alternative proof technique for Theorem 1 in \cite{GLSS18} that does not use KKT optimality conditions and hence does not need to assume convergence of mirror descent. 

\textbf{On the mirror map parameter and initialization size.}
In the rectangular case, minimizing the quantity in \eqref{eq:claim1} is equivalent to minimizing the nuclear norm in the limit $\beta \rightarrow 0$, and equivalent to minimizing the Frobenius norm in the limit $\beta \rightarrow \infty$, see Appendix \ref{appendix:proofs_further_claims} for further details. This can be seen as a matrix analogue of the result in \cite{WGL+20}, which shows for linear diagonal networks with shared weights that gradient descent minimizes a quantity interpolating between the $\ell_1$ and (weighted) $\ell_2$ norms. For a network with two layers, this architecture corresponds to the parametrization $\x = \mathbf{u}\odot\mathbf{u} - \mathbf{v}\odot\mathbf{v}$, where $\odot$ denotes the elementwise Hadamard product and with which gradient descent has been shown to be a first-order approximation to mirror descent equipped with the hypentropy mirror map in the vector-case, see e.g.\ \cite{VKR20, WR20b}. In the positive semidefinite case, minimizing the quantity in \eqref{eq:claim1_psd} also corresponds to minimizing the nuclear norm in the limit $\alpha\rightarrow 0$, while in the limit $\alpha\rightarrow\infty$ the coefficient $\log(1/\alpha) - 1$ tends to minus infinity, and minimizing the quantity in \eqref{eq:claim1_psd} is equivalent to maximizing the nuclear norm. 

\section{The estimation problem}
\label{sec:the_estimation_problem}
In this section, we consider the estimation problem of reconstructing a rank-$r$ matrix $\X^\star$ from a set of linear measurements $y_i = \langle \A_i, \X^\star\rangle$, $i = 1,\dots,m$. Theorem \ref{theorem:bias} and Theorem \ref{theorem:bias_psd} imply that mirror descent, equipped with the spectral hypentropy mirror map \eqref{eq:mirror_map_hypentropy} or spectral entropy mirror map \eqref{eq:mirror_map}, approximately minimizes the nuclear norm for a small mirror map parameter $\beta$ or a small initialization size $\alpha$. Hence, mirror descent, a first-order algorithm minimizing the unregularized empirical risk, can recover a low-rank matrix $\X^\star$ under the same set of assumptions which is sufficient for nuclear norm minimization to be successful.

\subsection{Matrix sensing with restricted isometry property}
\label{sec:matrix_sensing_with_rip}
The following restricted isometry property has been shown to be sufficient to guarantee recovery of low-rank matrices using nuclear norm minimization \cite{RFP10}, and has also been used in \cite{LMZ18} to show that gradient descent with full-rank factorized parametrization recovers $\X^\star$.
\begin{definition}[Restricted isometry property \cite{RFP10}]
A set of matrices $\A_1,\dots,\A_m\in\R^{n\times n'}$ satisfies $(r,\delta)$-restricted isometry property (RIP) if for any matrix $\X\in\R^{n\times n'}$ with rank at most $r$, we have
\begin{equation*}
(1 - \delta)\|\X\|_F \le \biggl(\frac{1}{m}\sum_{i=1}^m\langle \A_i,\X \rangle^2\biggr)^{1/2} \le (1 + \delta)\|\X\|_F.
\end{equation*}
\end{definition}
With this definition, we have the following recovery guarantee for mirror descent.
\begin{theorem}
\label{theorem:sensing}
Assume that the set of measurement matrices $\{\A_i\}_{i=1}^m$ satisfies $(5r, \delta)$-restricted isometry property with $\delta \le 0.1$. Then, the mirror descent algorithm described in Theorem \ref{theorem:bias} with parameter $\beta < \frac{\|\X^\star\|_*}{1.05en}$ converges to a matrix $\X_\infty = \lim_{t\rightarrow \infty} \X_t$ that satisfies
\begin{equation}\label{eq:sensing1}
  \|\X_\infty - \X^\star\|_F \le \frac{\Delta_\beta\|\X^\star\|_* + (1 + \Delta_\beta)\frac{n\beta}{\log\frac{\|\X^\star\|_*}{\beta} - 1}}{C_\delta\sqrt{3r}},
\end{equation}
where $C_\delta = \frac{1}{2}(1 - \sqrt{\frac{2}{3}} - \delta(1 + \sqrt{\frac{2}{3}}))$ and $\Delta_\beta = (\frac{\log(\|\X^\star\|_*/\beta) - 1}{\log(1.05n)} - 1)^{-1}$.

If, additionally, $n = n'$ and $\X^\star$ is positive semidefinite, then the mirror descent algorithm described in Theorem \ref{theorem:bias_psd} with $\alpha < \frac{\|\X^\star\|_*}{en}$ converges to a positive semidefinite matrix $\X_\infty$ that satisfies
\begin{equation}\label{eq:sensing2}
  \|\X_\infty - \X^\star\|_F \le \frac{\Delta_\alpha\|\X^\star\|_*}{C_\delta\sqrt{3r}},
  \end{equation}
  where $\Delta_\alpha = (\frac{\log(\|\X^\star\|_*/\alpha) - 1}{\log n} - 1)^{-1}$.
\end{theorem}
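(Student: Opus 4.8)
The plan is to combine the implicit-bias characterization from Theorem \ref{theorem:bias} (resp. Theorem \ref{theorem:bias_psd}) with the RIP assumption, following the standard template for deriving recovery bounds from nuclear-norm-type minimization. The key observation is that $\X_\infty$ minimizes the quantity in \eqref{eq:claim1} among all global minimizers of $f$, and since $\X^\star$ itself achieves $f(\X^\star)=0$, we have that \eqref{eq:claim1} evaluated at $\X_\infty$ is no larger than \eqref{eq:claim1} evaluated at $\X^\star$. The residual $\X_\infty - \X^\star$ also lies in the kernel of the measurement operator (both achieve zero training error), so $\langle \A_i, \X_\infty - \X^\star\rangle = 0$ for all $i$. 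The goal is to leverage these two facts — an inequality on a nuclear-norm surrogate plus the kernel membership — to control $\|\X_\infty - \X^\star\|_F$ via RIP.

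First I would show that the quantity in \eqref{eq:claim1} is, up to the $\beta$-dependent corrections, a good proxy for the nuclear norm. Writing $g_\beta(\sigma) = \sigma\log\frac{1}{\beta} + \sigma\log(\sigma + \sqrt{\sigma^2+\beta^2}) - \sqrt{\sigma^2+\beta^2}$ for the per-singular-value summand, I would establish two-sided bounds relating $\sum_i g_\beta(\sigma_i)$ to $(\log\frac{1}{\beta})\|\X\|_* + \text{lower-order terms}$. Concretely, since $\log(\sigma+\sqrt{\sigma^2+\beta^2})$ ranges between $\log\beta$ (at $\sigma=0$) and $\log(2\sigma)$ (for $\sigma\gg\beta$), the dominant behavior for small $\beta$ is $(\log\frac{1}{\beta})\sum_i\sigma_i = (\log\frac{1}{\beta})\|\X\|_*$; I would make this precise to extract the factor $\Delta_\beta = (\frac{\log(\|\X^\star\|_*/\beta)-1}{\log(1.05n)}-1)^{-1}$, which quantifies the gap between the surrogate and the true nuclear norm when $\beta$ is small relative to $\|\X^\star\|_*$. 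From the minimality of \eqref{eq:claim1} at $\X_\infty$, these bounds would yield an inequality of the rough form $\|\X_\infty\|_* \le (1+\Delta_\beta)\|\X^\star\|_* + (\text{additive } \beta\text{ term})$.

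Next I would decompose the residual $\mathbf{H} = \X_\infty - \X^\star$ relative to the rank-$r$ structure of $\X^\star$, splitting it into a part $\mathbf{H}_0$ supported on the row/column spaces of $\X^\star$ (rank at most $2r$) and a complementary part $\mathbf{H}_c$. The surrogate inequality from the previous step translates, via the triangle inequality and the fact that $\|\X^\star + \mathbf{H}_c\|_* = \|\X^\star\|_* + \|\mathbf{H}_c\|_*$ (orthogonality of supports), into a bound showing $\|\mathbf{H}_c\|_*$ is controlled by $\|\mathbf{H}_0\|_*$ plus the error terms — the standard cone condition. Then I would apply RIP on the low-rank blocks: partitioning $\mathbf{H}_c$ into pieces of rank at most $3r$ ordered by decreasing singular values and using the $(5r,\delta)$-RIP together with $\langle\A_i,\mathbf{H}\rangle=0$, I would derive a lower bound on $\|\mathbf{H}_0+(\text{leading block})\|_F$ that, combined with the cone condition, produces the constant $C_\delta = \frac{1}{2}(1-\sqrt{2/3}-\delta(1+\sqrt{2/3}))$ in the denominator. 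Assembling these pieces gives \eqref{eq:sensing1}.

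\emph{The hard part} will be the careful bookkeeping in the first step: controlling the nonlinear surrogate $g_\beta$ tightly enough that the additive correction $(1+\Delta_\beta)\frac{n\beta}{\log(\|\X^\star\|_*/\beta)-1}$ and the multiplicative factor $\Delta_\beta$ come out with the exact constants stated, particularly tracking how the $-\sqrt{\sigma_i^2+\beta^2}$ term contributes an $O(n\beta)$ error across all $n$ singular values and ensuring the condition $\beta < \frac{\|\X^\star\|_*}{1.05en}$ keeps $\Delta_\beta$ positive and the logarithms well-behaved. For the positive semidefinite case \eqref{eq:sensing2}, I would repeat the argument using the surrogate \eqref{eq:claim1_psd}: here the entropy term $\sum_i\lambda_i\log\lambda_i$ plays the role of the correction, but since $\X_\infty\in\S^n_+$ has $\|\X_\infty\|_* = \tr(\X_\infty)$ and the measurement operator preserves the trace-type structure, the analysis simplifies and the additive $\beta$-term disappears, leaving the cleaner bound with $\Delta_\alpha = (\frac{\log(\|\X^\star\|_*/\alpha)-1}{\log n}-1)^{-1}$ in place of $\Delta_\beta$. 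The RIP-based cone argument is otherwise identical, reusing the same $C_\delta$.
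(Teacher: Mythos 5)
Your proposal is correct and follows essentially the same route as the paper's proof: first bounding $\|\X_\infty\|_*$ via the minimality of the surrogate \eqref{eq:claim1} at $\X_\infty$ versus $\X^\star$ (extracting $\Delta_\beta$ and the additive $n\beta$ term), then running the standard Recht--Fazel--Parrilo cone argument with the decomposition $\mathbf{R}=\mathbf{R}_0+\mathbf{R}_c$ and rank-$3r$ blocks to obtain $C_\delta$. Your observation that the additive term vanishes in the positive semidefinite case (because the surrogate \eqref{eq:claim1_psd} lacks the $\sqrt{\sigma_i^2+\beta^2}$ correction) also matches the paper's claim that \eqref{eq:sensing2} follows by the same steps.
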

Among the existing results on implicit regularization in matrix sensing, the result that is perhaps most closely related to Theorem \ref{theorem:sensing} is Theorem 1.1 in \cite{LMZ18}, which considers the positive semidefinite case of matrix sensing with full-rank parametrization $\X = \U\U^\top$, $\U\in\R^{n\times n}$, and shows that gradient descent recovers $\X^\star$ under the assumption of restricted isometry with constant $\delta \le c /(\kappa^3\sqrt{r}\log^2n)$, where $\kappa$ is the condition number of $\X^\star$ and $c>0$ is an absolute constant.

The analysis that leads to Theorem \ref{theorem:sensing} differs significantly from the proof of Theorem 1.1 in \cite{LMZ18}. The proof of Theorem 1.1 in \cite{LMZ18} involves an analysis of the trajectory of gradient descent, relating it to the population dynamics defined by the population risk 
\begin{equation*}
  \E_{(\A_i)_{kl}\sim \gauss(0,1)}[f(\U_t\U_t^\top)] = \|\U_t\U_t^\top - \X^\star\|_F^2,
\end{equation*}
and identifying adaptive rank-$r$ subspaces to which the iterates $\U_t$ are approximately confined. On the other hand, Theorem \ref{theorem:sensing} follows from Theorem \ref{theorem:bias} and Theorem \ref{theorem:bias_psd}, which rely on a simple analysis of the evolution of the Bregman divergence, and employs arguments similar to the ones used in \cite{RFP10} to prove that RIP is sufficient to guarantee that nuclear norm minimization recovers $\X^\star$. Further, our analysis applies both to the case of rectangular and positive semidefinite matrices, while the results in \cite{LMZ18} only hold for the full-rank parametrization $\X = \U\U^\top$, $\U\in\R^{n\times n}$, for which an extension to general rectangular matrices does not seem to be trivial, see e.g.\ \cite{JKN16, MLC21, SL16, TBSSR16, ZL16}.

\textbf{Sample complexity and condition number.} The restricted isometry property assumption in Theorem 1.1 in \cite{LMZ18} requires a restricted isometry constant $\delta \le c/(\kappa^3\sqrt{r}\log^2n)$ for an absolute constant $c>0$, which leads to a sample complexity of $\O(\kappa^6nr^2\log^5n)$ if the sensing matrices $\A_i$'s belong to a general class of random matrices \cite{RFP10}. It is conjectured in \cite{LMZ18} that the dependence of $\delta$ on the rank $r$ and condition number $\kappa$ is not tight, and that $\delta$ only needs to be smaller than some absolute constant. On the other hand, Theorem \ref{theorem:sensing} only requires $\delta \le 0.1$, which is the same assumption that is sufficient to guarantee that nuclear norm minimization recovers low-rank matrices \cite{RFP10} and leads to a sample complexity of $\O((n + n')r\log (nn'))$ . To the best of our knowledge, our result for mirror descent is the first recovery guarantee for an implicit regularization-based algorithm for (rectangular) matrix sensing that only requires the same assumptions as nuclear norm minimization \cite{RFP10}.

\textbf{Convergence speed and dependence on intialization size.}
The analysis along the trajectory of gradient descent in \cite{LMZ18} allows to establish convergence speed guarantees and a polynomial dependence of the estimation error $\|\U_t\U_t^\top - \X^\star\|_F$ on the initialization size $\alpha$. On the other hand, we have no convergence speed guarantees for mirror descent (beyond the bounds on the empirical risk \eqref{eq:claim2} and \eqref{eq:claim2_psd}), and the final estimation error $\|\X_\infty-\X^\star\|_F$ depends logarithmically on the parameters $\alpha$ and $\beta$, which can lead to an unpractically small value for $\alpha$ and $\beta$ being required to reach some desired accuracy $\varepsilon$. Nonetheless, our result guarantees exact recovery of $\X^\star$ in the limit $\alpha, \beta \rightarrow 0$, which is a setting often considered in the literature, e.g.\ \cite{GWBNS17, LLL21, WGL+20}.

We remark that, beyond the aforementioned result on implicit regularization in matrix sensing \cite{LMZ18}, similar recovery results that include convergence speed guarantees and a polynomial dependence on the initialization size have also been established for implicit regularization-based algorithms in the context of sparse linear regression \cite{VKR19} and sparse phase retrieval \cite{WR20b, WR21}. However, these results all require a sample complexity that scales quadratically in the respective notions of sparsity, and we leave it to future work to investigate whether it is possible to establish convergence guarantees that include a convergence speed analysis and depend polynomially on $\alpha$ and $\beta$ when a sample complexity that scales only linearly in the rank $r$ of $\X^\star$ is assumed.

\subsection{Matrix completion}
In matrix completion, a subset of the entries of the matrix $\X^\star$ is observed, and the corresponding sensing matrices $\A_i$'s do not satisfy the restricted isometry property. Instead, an incoherence condition together with a sufficient number of randomly observed entries have been used to guarantee recovery for nuclear norm minimization \cite{R11} and gradient descent \cite{MWCC18}, for instance.
\begin{definition}[Coherence \cite{CR09}]
  Let $U\subseteq\R^n$ be a linear subspace of dimension $r$ and $\mathbf{P}_U$ the orthogonal projection onto $U$. The coherence of $U$ is defined as ($\{\mathbf{e}_i\}_{i=1}^n$ denotes the canonical basis)
  \begin{equation*}
\mu(U) = \frac{n}{r}\max_{1\le i\le n}\|\mathbf{P}_U\mathbf{e}_i\|_2^2.
  \end{equation*}
\end{definition}
We have the following recovery guarantee for mirror descent under the same assumptions as in \cite{R11}.
\begin{theorem}
\label{theorem:completion}
Let $\X^\star\in\R^{n\times n'}$ be a rank-$r$ matrix with (compact) singular value decomposition $\X^\star = \U\boldsymbol{\Sigma}\V^\top$, where $\U\in\R^{n\times r}$,  $\V\in\R^{n'\times r}$ and $\boldsymbol{\Sigma}\in\R^{r\times r}$. Assume that:

\textbf{A1} The row and column spaces of $\X^\star$ have coherences bounded by $\mu_0>0$.\vspace{-0.5em}

\textbf{A2} The matrix $\U\V^\top$ has maximum entry bounded by $\mu_1\sqrt{r/nn'}$ in absolute value for some \\ \textcolor{white}{.} \hspace{0.7em} $\mu_1>0$.

Suppose that we observe $m$ entries of $\X^\star$ with locations sampled uniformly at random. Then, if $m\ge 32c\max\{\mu_0^2,\mu_1\}r(n + n')\log^2(2n')$ for some $c>1$, the mirror descent algorithm described in Theorem \ref{theorem:bias} with $\beta<\frac{\|\X^\star\|_*}{1.05en}$ converges to a matrix $\X_\infty = \lim_{t\rightarrow\infty}\X_t$ that satisfies
\begin{equation}\label{eq:completion1}
  \|\X_\infty - \X^\star\|_F \le 6\biggl(\Delta_\beta\|\X^\star\|_* + (1+\Delta_\beta)\frac{n\beta}{\log\frac{\|\X^\star\|_*}{\beta}-1}\biggr)\biggl(1 + \biggl(\frac{128cnn'\log^2n'}{9m}\biggr)^{\frac{1}{2}}\biggr),
\end{equation}
with probability at least $1-6\log (n')(n+n')^{2-2c} - (n')^{2-2\sqrt{c}}$, where $\Delta_\beta = (\frac{\log(\|\X^\star\|_*/\beta) - 1}{\log(1.05n)} - 1)^{-1}$.

If, additionally, $n = n'$ and $\X^\star$ is positive semidefinite, then the mirror descent algorithm described in Theorem \ref{theorem:bias_psd} with $\alpha < \frac{\|\X^\star\|_*}{en}$ converges to a positive semidefinite matrix $\X_\infty$ that satisfies
\begin{equation}\label{eq:completion2}
\|\X_\infty - \X^\star\|_F \le 6\Delta_\alpha\|\X^\star\|_*\biggl(1 + \biggl(\frac{128cn^2\log^2n}{9m}\biggr)^{\frac{1}{2}}\biggr),
\end{equation}
where $\Delta_\alpha = (\frac{\log(\|\X^\star\|_*/\alpha) - 1}{\log n} - 1)^{-1}$.
\end{theorem}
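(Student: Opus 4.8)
The plan is to follow the two-stage template already used for Theorem \ref{theorem:sensing}: a nuclear-norm control stage inherited verbatim from Theorem \ref{theorem:bias} (resp.\ Theorem \ref{theorem:bias_psd}), followed by a recovery stage in which the restricted-isometry argument of \cite{RFP10} is replaced by the incoherence-plus-random-sampling argument of \cite{R11}. The first observation is that any matrix whose entries on the observed set $\Omega$ coincide with those of $\X^\star$ achieves zero training error, so $\X^\star$ is itself a global minimizer of $f$. Theorem \ref{theorem:bias} therefore applies and guarantees that $\X_\infty$ minimizes the surrogate \eqref{eq:claim1} over the feasible set $\{\X : \langle \A_i,\X\rangle = \langle\A_i,\X^\star\rangle,\ i=1,\dots,m\}$; in particular the value of \eqref{eq:claim1} at $\X_\infty$ does not exceed its value at $\X^\star$.

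First I would turn this surrogate inequality into a bound on the nuclear-norm excess. Writing $g_\beta(\sigma) = \sigma\log\tfrac{1}{\beta} + \sigma\log(\sigma+\sqrt{\sigma^2+\beta^2}) - \sqrt{\sigma^2+\beta^2}$ for the summand of \eqref{eq:claim1}, elementary two-sided estimates of $g_\beta$ in terms of its argument, together with the bound $\sigma_i \le \|\X^\star\|_*$ on the singular values of $\X^\star$ and the hypothesis $\beta < \|\X^\star\|_*/(1.05en)$ (which makes the denominator $\log(\|\X^\star\|_*/\beta)-1$ positive and $\Delta_\beta$ well defined), yield the slack bound $\epsilon_\beta := \|\X_\infty\|_* - \|\X^\star\|_* \le \Delta_\beta\|\X^\star\|_* + (1+\Delta_\beta)\frac{n\beta}{\log(\|\X^\star\|_*/\beta)-1}$. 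This is exactly the bracketed quantity in \eqref{eq:completion1}, and coincides with the slack already derived in the proof of Theorem \ref{theorem:sensing}. For the positive semidefinite case I would repeat the computation with the surrogate \eqref{eq:claim1_psd}; its linearity in the eigenvalues apart from the entropy term removes the additive $n\beta$ correction and gives $\epsilon_\alpha := \|\X_\infty\|_* - \|\X^\star\|_* \le \Delta_\alpha\|\X^\star\|_*$ under $\alpha < \|\X^\star\|_*/(en)$.

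Next I would carry out the recovery stage, which is a stability version of the exact-recovery argument of \cite{R11}. Put $\mathbf{H} = \X_\infty - \X^\star$; feasibility gives $\mathcal{R}_\Omega(\mathbf{H}) = 0$, where $\mathcal{R}_\Omega$ is the rescaled sampling operator of \cite{R11}. Let $T = \{\U\mathbf{M}^\top + \mathbf{N}\V^\top\}$ be the tangent space at $\X^\star$, with orthogonal projections $\mathcal{P}_T$ and $\mathcal{P}_{T^\perp}$. The subgradient inequality for the nuclear norm gives $\|\X_\infty\|_* \ge \|\X^\star\|_* + \langle\U\V^\top,\mathbf{H}\rangle + \|\mathcal{P}_{T^\perp}(\mathbf{H})\|_*$, which combined with $\|\X_\infty\|_* \le \|\X^\star\|_* + \epsilon_\beta$ yields $\langle\U\V^\top,\mathbf{H}\rangle + \|\mathcal{P}_{T^\perp}(\mathbf{H})\|_* \le \epsilon_\beta$. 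Under A1, A2 and the stated sample complexity, the golfing-scheme construction of \cite{R11} furnishes a certificate $\mathbf{Y}$ in the range of $\mathcal{R}_\Omega$ with $\mathcal{P}_T(\mathbf{Y})$ close to $\U\V^\top$ and $\|\mathcal{P}_{T^\perp}(\mathbf{Y})\|_2 \le \tfrac12$; pairing $\mathbf{Y}$ with $\mathbf{H}$ and using $\langle\mathbf{Y},\mathbf{H}\rangle = 0$ (since $\mathcal{R}_\Omega(\mathbf{H})=0$) converts the last inequality into $\|\mathcal{P}_{T^\perp}(\mathbf{H})\|_* \lesssim \epsilon_\beta$. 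Finally, the concentration bounds of \cite{R11}, valid at the stated sample complexity with failure probability at most $6\log(n')(n+n')^{2-2c} + (n')^{2-2\sqrt{c}}$ (the first term arising from the $\mathcal{O}(\log n')$ rounds of the golfing scheme and the second from the near-isometry $\mathcal{P}_T\mathcal{R}_\Omega\mathcal{P}_T \succeq \tfrac12\mathcal{P}_T$), together with the identity $\mathcal{R}_\Omega\mathcal{P}_T(\mathbf{H}) = -\mathcal{R}_\Omega\mathcal{P}_{T^\perp}(\mathbf{H})$, bound $\|\mathcal{P}_T(\mathbf{H})\|_F$ by the factor $\sqrt{128cnn'\log^2n'/(9m)}$ appearing in \eqref{eq:completion1} times $\|\mathcal{P}_{T^\perp}(\mathbf{H})\|_F$. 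Combining the two projections through $\|\mathbf{H}\|_F \le \|\mathcal{P}_T(\mathbf{H})\|_F + \|\mathcal{P}_{T^\perp}(\mathbf{H})\|_F$ produces \eqref{eq:completion1}; the positive semidefinite estimate \eqref{eq:completion2} follows in the same way after replacing $\epsilon_\beta$ by $\epsilon_\alpha$ and $n'$ by $n$.

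The main obstacle is the recovery stage: one must upgrade the exact-recovery certificate machinery of \cite{R11} — which assumes $\|\X_\infty\|_* \le \|\X^\star\|_*$ and concludes $\mathbf{H}=\mathbf{0}$ — to the approximate-minimizer regime $\|\X_\infty\|_* \le \|\X^\star\|_* + \epsilon_\beta$, propagating the inexactness of the golfing-scheme certificate through the estimates while keeping track of the explicit numerical constants (the factors $6$ and $128/9$) and of the logarithmic factors that enter through the operator-norm concentration and the maximal sampling multiplicity. By contrast, the nuclear-norm control stage and the final triangle-inequality combination are comparatively routine.
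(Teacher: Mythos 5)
Your proposal follows essentially the same route as the paper's proof: a nuclear-norm excess bound $\|\X_\infty\|_* - \|\X^\star\|_* \le \Delta_\beta\|\X^\star\|_* + (1+\Delta_\beta)\frac{n\beta}{\log(\|\X^\star\|_*/\beta)-1}$ inherited from Theorem \ref{theorem:bias} exactly as in the proof of Theorem \ref{theorem:sensing}, followed by the dual-certificate and near-isometry machinery of \cite{R11} adapted to the approximate-minimizer regime, with your subgradient inequality playing the role of the paper's variational-characterization step and the same identities ($\langle\Y,\mathbf{H}\rangle=0$, $\mathcal{R}_\Omega\mathcal{P}_T(\mathbf{H})=-\mathcal{R}_\Omega\mathcal{P}_{T^\perp}(\mathbf{H})$) driving the two projection bounds. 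The bookkeeping you flag as remaining work (propagating the certificate estimates to obtain the constants $6$ and $128/9$) is precisely what the paper's proof carries out.
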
 

To the best of our knowledge, there are no recovery guarantees in matrix completion for an unregularized empirical risk minimization-based algorithm that does not explicitly enforce the low-rank constraint. For instance, when $\X^\star\in\R^{n\times n}$ is positive semidefinite, Theorem 2 in \cite{MWCC18} establishes a recovery guarantee for gradient descent with low-rank parametrization $\X = \U\U^\top$, where $\U\in\R^{n\times r}$, applied to the unregularized empirical risk with a sample requirement of order $r^3n\log^3n$.

\section{Connection with gradient descent}
\label{sec:connection_with_gradient_descent}
In the vector-case, it has been established that gradient descent with parametrization $\x = \mathbf{u}\odot\mathbf{u} - \mathbf{v}\odot\mathbf{v}$ is a first-order approximation to mirror descent equipped with the hypentropy mirror map \cite{GHS20,VKR20,WR20b}, and a general framework connecting mirror descent and reparametrized gradient descent was studied in \cite{AW20}. A natural question is whether such a connection also extends to the matrix-case. 

In the following, we consider matrix sensing with square symmetric sensing matrices $\A_i\in\R^{n\times n}$, $i = 1,\dots,m$. Consider the following exponentiated gradient algorithm given by
\begin{equation}
  \label{eq:exponentiated_gradient}
  \begin{gathered}
  \X_t = \mathbf{U}_t - \mathbf{V}_t \\
  \U_{t+1} = \frac{\U_te^{-\eta \nabla f(\X_t)} + e^{-\eta \nabla f(\X_t)}\U_t}{2}, \qquad \V_{t+1} = \frac{\V_te^{\eta \nabla f(\X_t)} + e^{\eta \nabla f(\X_t)}\V_t}{2}.
  \end{gathered}
\end{equation}
When considering the full-rank parametrization $\X = \U\U^\top - \V\V^\top$, where $\U,\V\in\R^{n\times n}$, gradient descent on the variables $(\U, \V)$ is a first-order approximation to the exponentiated gradient algorithm defined in \eqref{eq:exponentiated_gradient}, with the step size rescaled by a constant factor and the approximation being exact in the limit $\eta\rightarrow 0$, see Appendix \ref{appendix:proofs_further_claims} for details. The gradient descent algorithm considered in \cite{ACHL19, GWBNS17, LMZ18, LLL21} can be obtained by initializing $\V_0 = \mathbf{0}$.
\begin{proposition}
  \label{prop1}
  Assume that the sensing matrices $\A_i$'s are symmetric and commute. Then:\vspace{-0.8em}
  \begin{enumerate}[leftmargin = 2em]\setlength\itemsep{-0.3em}
    \item Mirror descent equipped with the spectral entropy mirror map \eqref{eq:mirror_map} and any positive definite initialization $\X_0$ which commutes with all $\A_i$'s (e.g.\ $\X_0 = \alpha\mathbf{I}$ for some $\alpha > 0$) is equivalent to the exponentiated gradient algorithm defined in \eqref{eq:exponentiated_gradient} with initialization $\U_0 = \X_0$ and $\V_0 = \mathbf{0}$. 
    \item Mirror descent equipped with the spectral hypentropy mirror map \eqref{eq:mirror_map_hypentropy} with parameter $\beta > 0$ and initialization $\X_0 = \mathbf{0}$ is equivalent to the exponentiated gradient algorithm defined in \eqref{eq:exponentiated_gradient} with initialization $\U_0 = \V_0 = \frac{1}{2}\beta\mathbf{I}$.
  \end{enumerate}
\end{proposition}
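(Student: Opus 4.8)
The plan is to exploit the commutativity assumption to collapse both the mirror-descent and the exponentiated-gradient dynamics into decoupled scalar recursions in a common eigenbasis, and then to match these recursions term by term. Since the $\A_i$'s are symmetric and commute, they admit a common orthonormal eigenbasis $\mathbf{Q}$. The crucial structural fact is that the gradient $\nabla f(\X) = \frac{1}{m}\sum_{i=1}^m(\langle\A_i,\X\rangle - y_i)\A_i$ is a linear combination of the $\A_i$'s, and is therefore diagonalized by $\mathbf{Q}$ for \emph{every} argument $\X$, irrespective of whether $\X$ itself is diagonal in that basis. This is what makes the symmetrized products in \eqref{eq:exponentiated_gradient} harmless.

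First I would record the gradients of the two mirror maps as primary matrix functions, citing Appendix \ref{appendix:mirror_maps_gradients}: $\nabla\Phi(\X) = \log\X$ for the spectral entropy, and $\nabla\Phi_\beta(\X) = \operatorname{arcsinh}(\X/\beta)$ for the spectral hypentropy (using that $\operatorname{arcsinh}$ is odd and that the singular values of a symmetric matrix are the absolute values of its eigenvalues). The update \eqref{eq:mirror_descent} then reads $\log\X_{t+1} = \log\X_t - \eta\nabla f(\X_t)$ and $\operatorname{arcsinh}(\X_{t+1}/\beta) = \operatorname{arcsinh}(\X_t/\beta) - \eta\nabla f(\X_t)$, respectively.

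The heart of the argument is then an induction showing that all matrices arising in both algorithms remain simultaneously diagonalized by $\mathbf{Q}$. For part 1, the choice $\V_0 = \mathbf{0}$ is preserved by the $\V_t$-update, so $\X_t = \U_t$; and whenever $\X_t$ commutes with every $\A_i$ (hence with $\nabla f(\X_t)$ and $e^{-\eta\nabla f(\X_t)}$) the symmetrized product collapses to $\U_{t+1} = \U_t\, e^{-\eta\nabla f(\X_t)}$, which coincides with $\X_{t+1} = \exp(\log\X_t - \eta\nabla f(\X_t)) = \X_t\, e^{-\eta\nabla f(\X_t)}$ because the two commuting exponents split. Positive definiteness and commutation with the $\A_i$'s are inherited by this product, closing the induction with initialization $\U_0 = \X_0$. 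For part 2, the gradients are diagonalized by $\mathbf{Q}$ automatically, so $\U_t,\V_t$ stay diagonal in $\mathbf{Q}$ starting from $\U_0 = \V_0 = \frac{1}{2}\beta\mathbf{I}$, the symmetrization is again trivial, and eigendirection by eigendirection I obtain $u_t = \frac{\beta}{2}e^{-S_t}$, $v_t = \frac{\beta}{2}e^{S_t}$ with $S_t = \eta\sum_{s<t}(\text{eigenvalue of }\nabla f(\X_s))$. Hence $x_t = u_t - v_t = -\beta\sinh(S_t)$, i.e. $\operatorname{arcsinh}(x_t/\beta) = -S_t$, which is exactly the mirror-descent recursion for $\operatorname{arcsinh}(x_t/\beta)$ started from $\X_0 = \mathbf{0}$ (so $S_0 = 0$); the eigenvalues, and therefore the full matrices, agree at every step.

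I expect the main obstacle to be the bookkeeping that justifies discarding the symmetrization: one must verify carefully that at each step the relevant matrices lie in the commutative algebra generated by the $\A_i$'s and the initialization, so that $\U_t\, e^{\mp\eta\nabla f(\X_t)} = e^{\mp\eta\nabla f(\X_t)}\U_t$ and the functions $\exp$, $\log$, and $\operatorname{arcsinh}$ factor exactly as their scalar counterparts do. Once this invariance is in place, the identity $e^{A+B} = e^A e^B$ for commuting $A,B$ together with the evenness of $\Phi_\beta$ and oddness of $\operatorname{arcsinh}$ makes the term-by-term matching routine; the only remaining care is to confirm that the arguments of $\log$ and $\operatorname{arcsinh}$ remain in the correct domain (positive definite, respectively symmetric) along the whole trajectory.
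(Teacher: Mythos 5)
Your proposal is correct and follows essentially the same route as the paper: both rest on the observation that $\nabla f(\X)$ is always a linear combination of the $\A_i$'s, an induction keeping all iterates in the commutative algebra generated by the $\A_i$'s and the initialization, and the resulting exact splitting of $\exp$, $\log$, and $\operatorname{arcsinh}$ as scalar functions. The only cosmetic difference is in part~2, where the paper works at the matrix level via the invariant $\U_t\V_t = \tfrac{1}{4}\beta^2\mathbf{I}$ and solves for $\U_t,\V_t$ in terms of $\X_t$, whereas you diagonalize and solve the scalar recursions explicitly --- your closed forms $u_t = \tfrac{\beta}{2}e^{-S_t}$, $v_t = \tfrac{\beta}{2}e^{S_t}$ reproduce that same invariant.
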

As a Corollary, we obtain a generalization of Theorem 1 in \cite{GWBNS17}. Let $\widetilde{f}(\U) = f(\U\U^\top)$.
\begin{corollary}\label{coroll}
  Assume that the sensing matrices $\A_i$'s are symmetric and commute, and that there exists a $\X'\in\S^n_+$ satisfying $f(\X') = 0$. Then, the gradient flow defined by $\frac{d\U_t}{dt} = -\nabla \widetilde{f}(\U_t)$ and any initialization satisfying $\U_0\U_0^\top = \alpha\mathbf{I}$ converges to a matrix $\U_\infty$ minimizing
  \begin{equation*}
    \sum_{i=1}^n\biggl(\log\frac{1}{\alpha} - 1\biggr)\lambda_i+ \lambda_i\log\lambda_i
  \end{equation*}
  among all global minimizers of $\widetilde{f}$, where $\{\lambda_i\}_{i=1}^n$ denote the eigenvalues of the matrix $\U_\infty\U_\infty^\top$.
\end{corollary}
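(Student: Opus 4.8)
The plan is to reduce the gradient flow on the factor $\U$ to a continuous-time mirror flow for the spectral entropy mirror map $\Phi(\X) = \tr(\X\log\X - \X)$, and then rerun the Bregman-divergence potential argument behind Theorem \ref{theorem:bias_psd} in continuous time. First I would record the structural facts. Since the sensing matrices are symmetric, $\nabla f(\X) = \frac{1}{m}\sum_{i=1}^m(\langle\A_i,\X\rangle - y_i)\A_i$ is symmetric and, being a linear combination of the $\A_i$'s, is diagonal in their common eigenbasis $\mathbf{Q}$ for \emph{every} $\X$. Writing $\X_t = \U_t\U_t^\top$ and using $\nabla\widetilde{f}(\U) = 2\nabla f(\U\U^\top)\U$, the gradient flow $\dot\U_t = -\nabla\widetilde{f}(\U_t)$ gives
\begin{equation*}
\dot\X_t = -2\bigl(\nabla f(\X_t)\X_t + \X_t\nabla f(\X_t)\bigr).
\end{equation*}
Because $\X_0 = \U_0\U_0^\top = \alpha\mathbf{I}$ is diagonal in the $\mathbf{Q}$ basis and the right-hand side is diagonal in the $\mathbf{Q}$ basis whenever $\X_t$ is, uniqueness of ODE solutions keeps $\X_t$ simultaneously diagonalizable with all $\A_i$'s; in particular $\X_t$ commutes with $\nabla f(\X_t)$, each eigenvalue evolves multiplicatively, and so $\X_t\succ 0$ for all $t$ and $\log\X_t$ is well defined.

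This commutativity is exactly what lets me pass to a mirror flow: since $\X_t$ and $\dot\X_t$ commute, $\frac{d}{dt}\log\X_t = \X_t^{-1}\dot\X_t = -4\nabla f(\X_t)$, i.e. $\frac{d}{dt}\nabla\Phi(\X_t) = -4\nabla f(\X_t)$ with $\X_0 = \alpha\mathbf{I}$. Up to the harmless constant time rescaling by $4$, the orbit $\X_t$ is therefore the mirror flow for the spectral entropy mirror map initialized at $\alpha\mathbf{I}$, which is precisely the continuous-time counterpart of Proposition \ref{prop1}(1).

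Next I would replicate the analysis of Theorem \ref{theorem:bias_psd} in continuous time. Differentiating the Bregman divergence and substituting the mirror flow gives, for any $\X'\in\S^n_+$,
\begin{equation*}
\frac{d}{dt}D_\Phi(\X',\X_t) = -\Bigl\langle \tfrac{d}{dt}\nabla\Phi(\X_t),\, \X' - \X_t\Bigr\rangle = -4\langle\nabla f(\X_t),\X_t - \X'\rangle.
\end{equation*}
Taking $\X'$ to be any global minimizer, so that $f(\X') = 0$ forces $\langle\A_i,\X'\rangle = y_i$ and hence $\langle\nabla f(\X_t),\X_t - \X'\rangle = 2f(\X_t)$, this becomes $\frac{d}{dt}D_\Phi(\X',\X_t) = -8f(\X_t)\le 0$, a decrease that is \emph{independent} of the choice of global minimizer $\X'$. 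Nonnegativity of $D_\Phi$ then forces $\int_0^\infty f(\X_t)\,dt < \infty$; combined with the fact that $f(\X_t)$ is nonincreasing along the flow (as $\frac{d}{dt}f(\X_t) = -4\tr(\nabla f(\X_t)^2\X_t)\le 0$), this yields $f(\X_t)\to 0$, so $\X_t$ converges to a global minimizer $\X_\infty$; the routine upgrade from subsequential to genuine convergence proceeds as in Theorem \ref{theorem:bias_psd}. Since the decrease of $D_\Phi(\X',\X_t)$ is the same for all global minimizers, $\X_\infty$ is the one closest to $\alpha\mathbf{I}$ in $D_\Phi$.

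Finally I would identify the minimized quantity. Expanding $D_\Phi(\X',\alpha\mathbf{I})$ using $\nabla\Phi(\alpha\mathbf{I}) = (\log\alpha)\mathbf{I}$ and discarding terms depending only on $\alpha$ and $n$ gives
\begin{equation*}
D_\Phi(\X',\alpha\mathbf{I}) = \sum_{i=1}^n\Bigl(\lambda_i\log\lambda_i + \bigl(\log\tfrac{1}{\alpha} - 1\bigr)\lambda_i\Bigr) + \text{const},
\end{equation*}
where $\lambda_i$ are the eigenvalues of $\X'$. As the constant does not depend on $\X'$, minimizing $D_\Phi(\cdot,\alpha\mathbf{I})$ over global minimizers coincides with minimizing \eqref{eq:claim1_psd}, and since $\X_\infty = \U_\infty\U_\infty^\top$ this is the claim. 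The hard part is the second step: the clean identity $\frac{d}{dt}\log\X_t = \X_t^{-1}\dot\X_t$, and hence the reduction to a mirror flow, holds only because $\X_t$ and $\dot\X_t$ commute, which is exactly what the commutativity assumption on the $\A_i$'s buys; without it the derivative of the matrix logarithm carries an additional integral term and no exact mirror-flow description is available.
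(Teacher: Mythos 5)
Your proof is correct, and it rests on the same core mechanism as the paper's: commutativity of the $\A_i$'s makes the flow of $\X_t=\U_t\U_t^\top$ into the spectral-entropy mirror flow, after which the Bregman potential argument of Theorem \ref{theorem:bias_psd} identifies the limit. The execution differs in a worthwhile way, though. The paper routes the corollary through the \emph{discrete-time} exponentiated gradient algorithm \eqref{eq:exponentiated_gradient}: Proposition \ref{prop1}(1) identifies it with mirror descent, Appendix \ref{appendix:proofs_further_claims} shows factorized gradient descent is its first-order approximation in $\eta$, and the characterization of Theorem \ref{theorem:bias_psd} is then carried over to the $\eta\to 0$ limit. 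You instead derive $\dot\X_t=-2(\nabla f(\X_t)\X_t+\X_t\nabla f(\X_t))$ directly, use simultaneous diagonalizability (propagated by ODE uniqueness from $\X_0=\alpha\mathbf{I}$) to get $\frac{d}{dt}\log\X_t=-4\nabla f(\X_t)$, and then rerun the potential argument in continuous time, where the identity $\frac{d}{dt}D_\Phi(\X',\X_t)=-8f(\X_t)$ holds exactly and the discrete-time error term $D_\Phi(\X_t,\X_{t+1})$ — and with it the strong-convexity and step-size bookkeeping — disappears entirely. Your version is therefore more self-contained for the continuous-time statement and avoids the (slightly informal) discrete-to-continuous passage in the paper; the paper's version buys the additional discrete-time correspondence between gradient descent and mirror descent, which is of independent interest. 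Two points you share with the paper at the same level of informality: the upgrade from $f(\X_t)\to 0$ to genuine convergence of $\X_t$, and the fact that the corollary asserts convergence of $\U_t$ itself rather than only of $\U_t\U_t^\top$ (this does follow here, since $\U_t=\mathbf{M}_t\U_0$ with $\mathbf{M}_t$ symmetric, positive definite and diagonal in the common eigenbasis, so $\U_t$ converges whenever $\X_t=\alpha\mathbf{M}_t^2$ does).
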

This result generalizes Theorem 1 in \cite{GWBNS17} in two ways: first, we obtain a precise characterization of the quantity that is minimized for any initialization size $\alpha > 0$, which indeed coincides with the nuclear norm in the limit $\alpha\rightarrow 0$. Second, convergence to a global minimizer is assumed in Theorem 1 in \cite{GWBNS17}, which is non-trivial a-priori, since the optimization problem in $\U_t$ is non-convex. On the other hand, we show convergence of mirror descent in Theorem \ref{theorem:bias_psd}, which then carries over to gradient flow on the non-convex objective $\widetilde{f}$ via Proposition \ref{prop1} (when the $\A_i$'s are symmetric and commute).

\section{Numerical simulations}
\label{sec:numerical_simulations}
In this section, we present numerical simulations examining the dependence of the final estimates of mirror descent equipped with the spectral entropy \eqref{eq:mirror_map} and of gradient descent with full-rank parametrization $\X = \U\U^\top$, $\U\in\R^{n\times n}$, on the initialization size for random Gaussian sensing matrices $\A_1,\dots,\A_m$. We evaluate the nuclear norm $\|\X\|_*$, the reconstruction error $\|\X-\X^\star\|_F$, and the effective rank \cite{RV07} $\operatorname{effrank}(\X) = \exp(-\sum_{i=1}^np_i\log p_i)$, where $p_i = \sigma_i / \|\X\|_*$, $i=1,\dots, n$, denote the normalized singular values of $\X$. Numerical simulations for matrix completion are provided in Appendix \ref{appendix:additional_experiments} and yield similar results as for random Gaussian sensing matrices.

Our experimental setup is as follows. We generate a rank-$r$ positive semidefinite matrix by sampling a random matrix $\U^\star\in\R^{n\times r}$ with i.i.d.\ $\gauss(0,1)$ entries, setting $\X^\star = \U^\star(\U^\star)^\top$ and normalizing $\|\X^\star\|_* = 1$. We generate $m$ symmetric sensing matrices $\A_i = \frac{1}{2}(\mathbf{B}_i + \mathbf{B}_i^\top)$, where the entries of $\mathbf{B}_i$ are i.i.d.\ $\gauss(0,1)$. We run mirror descent and gradient descent with initialization $\X_0 = \alpha\mathbf{I}$ and constant step sizes $\mu = 1$ and $\mu = 0.25$, respectively, for $T = 5000$ iterations, and vary the initialization size $\alpha$ from $10^{-1}$ to $10^{-10}$. For reference, we also include the ground truth $\X^\star$ and the estimate $\operatorname{argmin}\{\|\X\|_*: \X\succeq \mathbf{0}, f(\X) = 0\}$ obtained from minimizing the nuclear norm using the \texttt{cvxopt} package. The experiments for Figure \ref{figure1} were implemented in Python 3.9 and took around $10$ minutes on a machine with 1.1-GHz Intel Core i5 CPU and 8 GB of RAM.
\begin{figure}[!htb]
\centering
\includegraphics[width=0.925\textwidth]{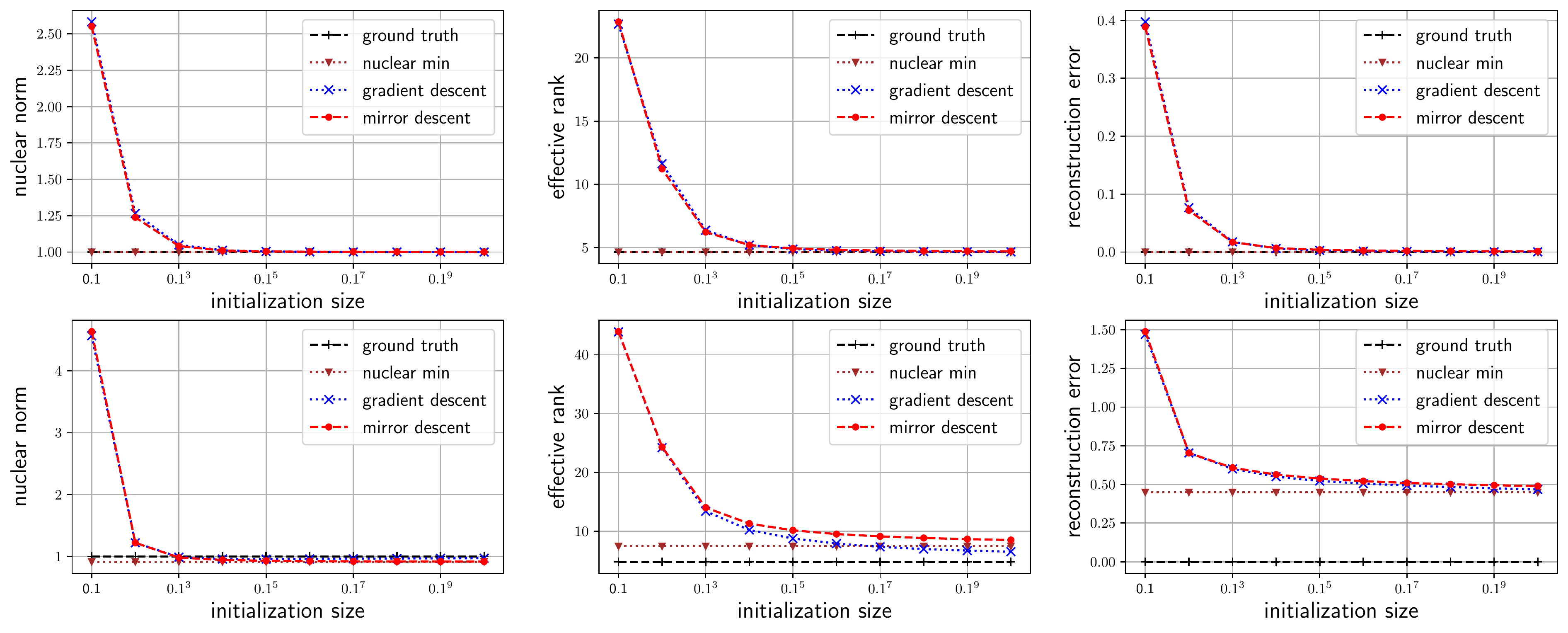}
\caption{Nuclear norm, effective rank \cite{RV07} and reconstruction error against initialization size $\alpha$ for $n=50$ and $r=5$. Top row: $m=3nr$ measurements. Bottom row: $m=nr$ measurements.}\vspace{-0.5em}
\label{figure1}
\end{figure}

With $m=3nr$ measurements (Figure \ref{figure1}, top row), nuclear norm minimization coincides with $\X^\star$. In this case, our simulations show that the estimate of gradient descent closely tracks the estimate of mirror descent for all initialization sizes in terms of nuclear norm, effective rank \cite{RV07} and reconstruction error, even though the sensing matrices do not commute. When we have $m=nr$ measurements (Figure \ref{figure1}, bottom row), nuclear norm minimization does not recover the planted matrix $\X^\star$, and our simulations show that gradient descent puts more emphasis on lowering the effective rank at the expense of a higher nuclear norm for initialization sizes smaller than $10^{-3}$. Since Theorem \ref{theorem:bias_psd} guarantees that mirror descent minimizes the nuclear norm in the limit $\alpha\rightarrow 0$, regardless of the number of measurements, this is in line with the observations in \cite{ACHL19, LLL21}, which suggest that a notion of rank-minimization is involved in the implicit bias of gradient descent. 

\section{Conclusion}
\label{sec:conclusion}
In this paper, we analyzed discrete-time mirror descent for matrix sensing, equipped with the spectral hypentropy mirror map in the case of general rectangular matrices and equipped with the spectral entropy mirror map in the particular case of positive semidefinite matrices. We showed that mirror descent minimizes a quantity that interpolates between the nuclear norm and Frobenius norm in the rectangular case, and is a linear combination of the nuclear norm and the negative von Neumann entropy in the positive semidefinite case. We used this result to show that mirror descent, a first-order algorithm minimizing the unregularized empirical risk that does not explicitly enforce low-rankness of its iterates, can recover a low-rank matrix $\X^\star$ under the same set of assumptions which is sufficient for nuclear norm minimization to recover $\X^\star$ \cite{R11, RFP10}. 

A downside of mirror descent compared to gradient descent with full-rank factorized parametrization, which is an alternative implicit regularization-based algorithm for matrix sensing, is its computational cost: the spectral hypentropy requires a singular value decomposition in each iteration, while the spectral entropy requires computing a matrix exponential in each iteration, see Appendix \ref{appendix:mirror_maps_gradients}. For general sensing matrices, the computational cost of mirror descent is of the same order as that of gradient descent, since a singular value decomposition takes $\O(n^2n')$ operations and matrix exponentials can be computed in $\O(n^3)$ operations, while evaluating the gradient $\nabla f(\X) = \frac{1}{m}\sum_{i=1}^m(\langle\A_i, \X\rangle - y_i)\A_i$ requires $\O(mnn')$ operations, and $m>n$ is typically required to recover $\X^\star$. However, the inner product $\langle \A_i, \X\rangle$ can be computed in $\O(1)$ operations in matrix completion, in which case the computational cost of gradient descent with full-rank factorized parametrization is dominated by the multiplication of two $n\times n$ matrices. When $r\ll \min\{n,n'\}$, neither implicit regularization-based approach achieves the computational efficiency of gradient descent with low-rank factorized parametrization $\X = \U\V^\top$, where $\U\in\R^{n\times r}$, $\V\in\R^{n'\times r}$, see e.g.\ \cite{CLC19, MWCC18}.

Previous results on implicit regularization-based algorithms in matrix sensing \cite{LMZ18}, sparse linear regression \cite{VKR19} and sparse phase retrieval \cite{WR20b, WR21} establish convergence speed guarantees and a polynomial dependence on the initialization size $\alpha$, while a sample complexity that scales quadratically in the respective notions of sparsity is required. On the other hand, our results for mirror descent in matrix sensing require a sample complexity that scales linearly in the rank $r$ of $\X^\star$, but do not establish any convergence speed guarantees, and the bound on the estimation error depends logarithmically on $\alpha$ and $\beta$. We leave bridging this gap, i.e.\ establishing convergence speed guarantees with a polynomial dependence on $\alpha$ and $\beta$ while only assuming a linear sample complexity, to future work.

\section*{Acknowledgments and Disclosure of Funding}
Fan Wu is supported by the EPSRC and MRC through the OxWaSP CDT programme (EP/L016710/1). Patrick Rebeschini was supported in part by the Alan Turing Institute under the EPSRC grant EP/N510129/1.

%-------------------------------- Bibliography --------------------------------
%------------------------------------------------------------------------------
\bibliography{references}
\bibliographystyle{plainnat}

%---------------------------------- Appendix ----------------------------------
%------------------------------------------------------------------------------
\clearpage
\appendix

{\LARGE\textbf{Appendix}}
\vspace{3mm}

The Appendix is organized as follows. In Appendix \ref{appendix:mirror_maps_gradients}, we provide the gradients for the spectral entropy \eqref{eq:mirror_map} and spectral hypentropy \eqref{eq:mirror_map_hypentropy} mirror maps, and discuss the per-iteration computational cost of the corresponding mirror descent algorithms. In Appendix \ref{appendix:proofs}, we provide proofs for the claims made in the main paper. In Appendix \ref{appendix:additional_experiments}, we present additional experiments addressing the problem of matrix completion. 
%%%%%%%%%%%%%%%%%%%%%%%%%%%%%%%%%%%%%%%%%%%%%%%%%%%%%%%%%%%%

\section{Mirror maps and gradients}
\label{appendix:mirror_maps_gradients}
We first consider rectangular matrix sensing with the spectral hypentropy mirror map 
\begin{equation*}
  \Phi_\beta(\X) = \sum_{i=1}^{n}\sigma_i \operatorname{arcsinh}\biggl(\frac{\sigma_i}{\beta}\biggr) - \sqrt{\sigma_i^2 + \beta^2},
\end{equation*}
where $\{\sigma_i\}_{i=1}^n$ denote the singular values of $\X$.
Since $\Phi_\beta:\R^{n\times n'}\rightarrow \R$ is a function operating on the singular values of a matrix, we can use Theorem 3.1 in \cite{L95} to compute its gradient. Let $\X = \U\operatorname{diag}(\sigma_1,\dots,\sigma_n)\V^\top$ be the singular value decomposition of the matrix $\X$, where $\operatorname{diag}(\sigma_1,\dots,\sigma_n)$ denotes the diagonal matrix with diagonal elements $\sigma_1,\dots,\sigma_n$. Then, we have
\begin{equation*}
\nabla\Phi_\beta(\X) = \U\operatorname{diag}\Bigl(\operatorname{arcsinh}\Bigl(\frac{\sigma_1}{\beta}\Bigr),\dots,\operatorname{arcsinh}\Bigl(\frac{\sigma_n}{\beta}\Bigr)\Bigr)\V^\top
\end{equation*}
by Theorem 3.1 in \cite{L95}, see also \cite{GHS20}. This means that each step of mirror descent requires a singular value decomposition to compute 
\begin{equation*}
  \X_{t+1} = \nabla\Phi_\beta^{-1}\bigl(\nabla\Phi_\beta(\X_t) - \eta\nabla f(\X_t)\bigr),
\end{equation*}
which takes $\O(n^2n')$ operations. 

The singular value decomposition can be avoided if $n=n'$ and the sensing matrices $\A_i$'s are symmetric, which we can assume without loss of generality if $\X^\star\in\R^{n\times n}$ is symmetric, since then $y_i = \langle\A_i,\X^\star\rangle = \langle \frac{1}{2}(\A_i + \A_i^\top), \X^\star\rangle$ for all $i = 1,\dots,m$. In that case, the mirror descent iterates $\X_t$ stay symmetric for all $t\ge 0$, provided the initialization $\X_0$ is symmetric. Using the identity $\operatorname{arcsinh}(x) = \log(x + \sqrt{x^2 + 1})$, we can write
\begin{equation*}
  \Phi_\beta(\X_t) = \tr\Biggl(\X_t \log \Biggl(\frac{\X_t}{\beta} + \sqrt{\frac{\X_t^2}{\beta^2} + \mathbf{I}}\Biggr) - \sqrt{\X_t^2 + \beta^2\mathbf{I}}\Biggr),
    \end{equation*}
    since all matrices in above expression are symmetric and simultaneously diagonalizable. In this case, the gradient of the spectral hypentropy can be written as
    \begin{equation*}
  \nabla \Phi_\beta(\X_t) = \log \Biggl(\frac{\X_t}{\beta} + \sqrt{\frac{\X_t^2}{\beta^2} + \mathbf{I}}\Biggr),
    \end{equation*}
    and its inverse is given by
  \begin{equation*}
  \nabla\Phi_\beta^{-1}(\X_t) = \beta\frac{e^{\X_t} - e^{-\X_t}}{2}.
  \end{equation*}
Hence, for the mirror descent algorithm \eqref{eq:mirror_descent} we need to compute two matrix exponentials in each iteration. While computing matrix exponentials require $\O(n^3)$ operations, which is of the same order as a singular value decomposition, matrix exponentials are typically cheaper to compute in practice.

In the positive semidefinite case, the spectral entropy mirror map is given by 
\begin{equation*}
\Phi(\X) = \tr(\X\log \X - \X),
\end{equation*}
which has gradient given by
\begin{equation*}
\nabla\Phi(\X) = \log \X,
\end{equation*}
with inverse
\begin{equation*}
\nabla\Phi^{-1}(\X) = \exp (\X).
\end{equation*}
Hence, mirror descent equipped with the spectral entropy mirror map requires computing a matrix exponential in each iteration, which requires $\O(n^3)$ operations. 

\section{Proofs}
\label{appendix:proofs}
In this section, we provide proofs for the claims made in the main paper.
\subsection{Proof of Theorem \ref{theorem:bias}}
\label{appendix:proofs_theorem_bias}
\begin{proof}
  We begin by showing convergence of mirror descent to a global minimizer of the empirical risk $f$. The characterization of the limiting point follows immediately from the proof of convergence. Then, we show the bound \eqref{eq:claim2} by showing that the empirical risk $f(\X_t)$ is monotonously decreasing.

  \textbf{Part 1: Convergence of mirror descent.}\\
  The following identity characterizes the evolution of the Bregman divergence and follows from its definition \eqref{eq:bregman_divergence} and the mirror descent update \eqref{eq:mirror_descent}:
  \begin{equation}
  \label{eq:bregman_difference}
  D_{\Phi_\beta}(\X', \X_{t+1}) - D_{\Phi_\beta}(\X', \X_t) = - \eta\langle \nabla f(\X_t), \X_t - \X'\rangle + D_{\Phi_\beta}(\X_t, \X_{t+1}),
  \end{equation}
  where $\X'$ is any reference point. Letting $\X'$ be any global minimizer of $f$, the first term in \eqref{eq:bregman_difference} can be written as
  \begin{equation*}
  \langle \nabla f(\X_t), \X_t - \X'\rangle = \frac{1}{m}\sum_{i=1}^m\bigl(\langle\A_i, \X_t\rangle - y_i\bigr)\langle\A_i, \X_t - \X'\rangle = 2f(\X_t),
  \end{equation*}
  where we used the assumption that there exists a matrix achieving zero training error, i.e.\ $\langle\A_i, \X'\rangle = y_i$ for all $i=1,\dots,m$. The spectral hypentropy mirror map is $(2(\tau + \beta n))^{-1}$-strongly convex with respect to the nuclear norm $\|\cdot\|_*$ on the nuclear norm ball $\mathcal{B}(\tau) = \{\X\in\R^{n\times n'}: \|\X\|_*\le \tau\}$, see Theorem 14 in \cite{GHS20}. Writing $\tau_t = \max\{\|\X_t\|_*, \|\X_{t+1}\|_*\}$, we can bound the second term in \eqref{eq:bregman_difference} by
  \begin{align}
  \label{eq:bregman_step}
  D_{\Phi_\beta}(\X_t, \X_{t+1}) &= \Phi_\beta(\X_t) - \Phi_\beta(\X_{t+1}) - \langle \nabla\Phi_\beta(\X_{t+1}), \X_t - \X_{t+1}\rangle \nonumber\\
  &\le \langle \nabla\Phi_\beta(\X_t) - \nabla\Phi_\beta(\X_{t+1}), \X_t - \X_{t+1}\rangle - \frac{1}{4(\tau_t + \beta n)} \|\X_t - \X_{t+1}\|_*^2 \nonumber\\
  &= \langle \eta\nabla f(\X_t), \X_t - \X_{t+1} \rangle - \frac{1}{4(\tau_t + \beta n)}\|\X_t - \X_{t+1}\|_*^2 \nonumber\\
  &\le \eta \|\nabla f(\X_t)\|_2\|\X_t - \X_{t+1}\|_* - \frac{1}{4(\tau_t + \beta n)}\|\X_t-\X_{t+1}\|_*^2 \nonumber\\
  &\le \eta^2(\tau_t + \beta n)\|\nabla f(\X_t)\|_2^2,
  \end{align}
  where we used strong convexity of $\Phi_\beta$ in the second line, the mirror descent update \eqref{eq:mirror_descent} in the third line, the fact that the spectral norm $\|\cdot\|_2$ is the dual norm to the nuclear norm $\|\cdot\|_*$ in the fourth line, and we optimized a quadratic function in $\|\X_t-\X_{t+1}\|_*$ to obtain the last inequality.
  
  The spectral norm of the gradient $\nabla f$ can be bounded in terms of the empirical risk $f$: we have
  \begin{align*}
  \|\nabla f(\X_t)\|_2^2 &= \biggl\|\frac{1}{m}\sum_{i=1}^m\bigl(\langle \A_i, \X_t \rangle - y_i \bigr)\A_i \biggr\|_2^2 \\
  &\le \Biggl(\frac{1}{m}\sum_{i=1}^m\bigl|\langle \A_i, \X_t\rangle - y_i\bigr|\|\A_i\|_2\Biggr)^2 \\
  &\le \frac{1}{m}\sum_{i=1}^m \|\A_i\|_2^2 \cdot 2f(\X_t),
  \end{align*}
  where we used the triangle inequality in the second and the Cauchy-Schwarz inequality in the last line. Using the non-negativity of the Bregman divergence, we can rearrange the penultimate inequality in \eqref{eq:bregman_step} to obtain 
  \begin{equation}
  \label{eq:difference_step}
  \|\X_t-\X_{t+1}\|_* \le 4(\tau_t + \beta n)\eta\|\nabla f(\X_t)\|_2 \le \frac{1}{2}(\tau_t + \beta n),
  \end{equation}
  provided the step size $\eta$ satisfies
  \begin{equation}
    \label{eq:stepsize1}
  \eta \le \frac{1}{8\sqrt{2}}\Biggl(\frac{1}{m}\sum_{i=1}^m\|\A_i\|_2^2\cdot f(\X_t)\Biggr)^{-1/2}. 
  \end{equation}
  We will show below that the upper bound in \eqref{eq:stepsize1} is uniformly bounded from below by a constant $c > 0$, i.e.\ we can indeed choose a constant step size $\eta_t\equiv\eta \le c$. If $\|\X_{t+1}\|_* > \|\X_t\|_*$, then the reverse triangle inequality yields
  \begin{equation*}
  \|\X_{t+1}\|_* - \|\X_t\|_* \le \|\X_t - \X_{t+1}\|_* \le \frac{1}{2}\Bigl(\|\X_{t+1}\|_* + \beta n\Bigr),
  \end{equation*}
  which can be rearranged to $\|\X_{t+1}\|_*\le 2\|\X_t\|_* + \beta n$, so that also $\tau_t \le 2\|\X_t\|_* + \beta n$.
  Hence, the second term in \eqref{eq:bregman_difference} can be bounded by
  \begin{equation*}
  D_{\Phi_\beta}(\X_t, \X_{t+1}) \le \eta^2(\tau_t + \beta n)\|\nabla f(\X_t)\|_2^2 \le \eta f(\X_t),
  \end{equation*}
  provided that the step size $\eta$ also satisfies
  \begin{equation}
    \label{eq:stepsize2} 
    \eta \le \frac{1}{4}\Biggl(\frac{1}{m}\sum_{i=1}^m\|\A_i\|_2^2 \cdot \Bigl(\|\X_t\|_* + \beta n\Bigr)\Biggr)^{-1}.
  \end{equation}
  With this, the identity in \eqref{eq:bregman_difference} becomes
  \begin{equation}
  \label{eq:bregman_difference2}
  D_{\Phi_\beta}(\X', \X_{t+1}) - D_{\Phi_\beta}(\X',\X_t) = -2\eta f(\X_t) + D_{\Phi_\beta}(\X_t,\X_{t+1}) \le -\eta f(\X_t)
  \end{equation}
  for any global minimizer $\X'$ of $f$. Since the Bregman divergence $D_{\Phi_\beta}(\X',\X_t)$ is bounded from below by zero, this means that the empirical risk $f(\X_t)$ must converge to zero, which in turn implies that $\X_t$ converges to a global minimizer of $f$. 
  
  To see \emph{which} global minimizer mirror descent converges to, observe that the difference in \eqref{eq:bregman_difference_sketch} does not depend on the reference point $\X'$, as long as $\X'$ is a global minimizer of $f$. This means that the Bregman divergence $D_{\Phi_\beta}(\X', \X_t)$ is decreased by the same amount for \emph{all} global minimizers $\X'$, which then implies that $\X_t$ must converge to the global minimizer which is closest to $\X_0$ in terms of the Bregman divergence. Hence, writing $\{\sigma_i\}_{i=1}^n$ for the singular values of $\X_\infty = \lim_{t\rightarrow\infty}\X_t$ and using the identity $\operatorname{arcsinh}(x) = \log(x + \sqrt{x^2 + 1})$, the quantity
  \begin{align*}
  D_{\Phi_\beta}(\X_\infty, \X_0) &= \sum_{i=1}^{n}\sigma_i\operatorname{arcsinh}\biggl(\frac{\sigma_i}{\beta}\biggr) - \sqrt{\sigma_i^2 + \beta^2} - n\beta\\
  &= \sum_{i=1}^{n} \sigma_i\log\frac{1}{\beta} + \sigma_i\log\Bigl(\sigma_i + \sqrt{\sigma_i^2 + \beta^2}\Bigr) - \sqrt{\sigma_i^2 + \beta^2} - n\beta
  \end{align*}
  is minimized among all global minimizers of the empirical risk $f$, which is the quantity in \eqref{eq:claim1} modulo the constant $n\beta$.

  Finally, since we show convergence of $\X_t$, this means that the nuclear norm $\|\X_t\|_*$ and the empirical risk $f(\X_t)$ stay bounded for all $t\ge 0$. This implies that, in order to satisfy inequalities \eqref{eq:stepsize1} and \eqref{eq:stepsize2}, we can indeed choose a constant step size $\eta_t \equiv \eta\le c$, where the constant $c>0$ depends on the spectral norm of the sensing matrices $\A_i$'s and the observations $y_i$'s.

  \textbf{Part 2: Proving the bound \eqref{eq:claim2}.}\\  
  In order to show the bound \eqref{eq:claim2}, we first show that $f(\X_t)$ decreases monotonously. To this end, we verifiy that $f$ is $\frac{1}{m}\sum_{i=1}^m\|\A_i\|_2^2$-smooth with respect to the nuclear norm. Indeed, $\nabla f$ is Lipschitz continuous with Lipschitz constant $\frac{1}{m}\sum_{i=1}^m\|\A_i\|_2^2$, 
  \begin{align*}
  \|\nabla f(\X) - \nabla f(\Y)\|_2 &= \biggl\|\frac{1}{m}\sum_{i=1}^m\langle \A_i, \X - \Y \rangle\A_i \biggr\|_2 \\
  &\le \biggl\|\frac{1}{m}\sum_{i=1}^m\|\A_i\|_2\|\X-\Y\|_* \A_i \biggr\|_2 \\
  &\le \frac{1}{m}\sum_{i=1}^m\|\A_i\|_2^2 \cdot \|\X-\Y\|_*,
  \end{align*}
  where we used the duality of the nuclear and spectral norms in the second line. Hence, we can bound
  \begin{equation*}
  f(\X_{t+1}) \le f(\X_t) + \langle \nabla f(\X_t), \X_{t+1} - \X_t\rangle + \frac{1}{2m}\sum_{i=1}^m\|\A_i\|_2^2 \cdot \|\X_{t+1} - \X_t\|_*^2.
  \end{equation*}
  If we can bound 
  \begin{equation*}
  \langle \nabla f(\X_t), \X_{t+1} - \X_t\rangle \le -\frac{1}{2m}\sum_{i=1}^m\|\A_i\|_2^2 \cdot \|\X_{t+1} - \X_t\|_*^2,
  \end{equation*}
  then this would show that $f(\X_t)$ is monotonously decreasing. 
  Recall the proximal formulation of mirror descent (see e.g.\ \cite{BT03}),
  \begin{equation*}
  \X_{t+1} = \operatorname{arg}\min_{\X\in \R^{n\times n'}}\biggl\{\langle\nabla f(\X_t), \X - \X_t\rangle + \frac{1}{\eta}D_{\Phi_\beta}(\X, \X_t) \biggr\}.
  \end{equation*}
  Since the quantity being minimized is zero for $\X = \X_t$, we obtain the upper bound
  \begin{align*}
  \langle \nabla f(\X_t), \X_{t+1} - \X_t\rangle &\le -\frac{1}{\eta}D_{\Phi_\beta}(\X_{t+1}, \X_t) \\
  &= \frac{1}{\eta}\Bigl((\Phi_\beta(\X_t) - \Phi_\beta(\X_{t+1})) + \langle\nabla\Phi_\beta(\X_t), \X_{t+1} - \X_t\rangle\Bigr) \\
  &\le - \frac{1}{4\eta (\tau_t + \beta n)}\|\X_{t+1} - \X_t\|_*^2 \\
  &\le -\frac{1}{2m}\sum_{i=1}^m\|\A_i\|_2^2\cdot \|\X_{t+1} - \X_t\|_*^2,
  \end{align*}
  where we used strong convexity of $\Phi_\beta$ for the second inequality, and the last inequality holds if the step size $\eta$ satisfies inequality \eqref{eq:stepsize2}. This completes the proof that $f(\X_{t+1})\le f(\X_t)$. 
  
  To show the bound \eqref{eq:claim2}, assume that it were violated for some $t>0$. Since $f(\X_t)$ is non-increasing, this means that
  \begin{equation*}
  f(\X_s) \ge f(\X_t) > \frac{D_{\Phi_\beta}(\X_\infty, \X_0)}{\eta t}
  \end{equation*}
  for all $s\le t$. The bound in \eqref{eq:bregman_difference2} controls by how much the Bregman divergence must decrease in each iteration. Summing over the expression in \eqref{eq:bregman_difference2}, we obtain
  \begin{align*}
  D_{\Phi_\beta}(\X_\infty, \X_t) &= D_{\Phi_\beta}(\X_\infty, \X_0) + \sum_{s=0}^{t-1}D_{\Phi_\beta}(\X_\infty, \X_{s+1}) - D_{\Phi_\beta}(\X_\infty, \X_s) \\
  &< D_{\Phi_\beta}(\X_\infty, \X_0) - \sum_{s=0}^{t-1}\eta \frac{D_{\Phi_\beta}(\X_\infty, \X_0)}{\eta t} \\
  &=0,
  \end{align*}
  which contradicts the non-negativity of the Bregman divergence and therefore shows that the bound \eqref{eq:claim2} must be satisfied for all $t>0$.
\end{proof}

\subsection{Proof of Theorem \ref{theorem:bias_psd}}
\label{appendix:proofs_theorem_bias_psd}
The proof of Theorem \ref{theorem:bias_psd} follows the same steps as the proof of Theorem \ref{theorem:bias} and uses the fact that the spectral entropy \eqref{eq:mirror_map} is $(2\tau)^{-1}$ strongly convex with respect to the nuclear norm on the nuclear norm ball $\mathcal{B}_+(\tau) = \{\X\in\S^n_+: \|\X\|_*\le \tau\}$, for which we include a proof for completeness' sake.

\begin{lemma}[Strong convexity of the spectral entropy]
  \label{lemma:strong_convexity}
 The spectral entropy \eqref{eq:mirror_map} is $(2\tau)^{-1}$-strongly convex with respect to the nuclear norm $\|\cdot\|_*$ on the nuclear norm ball $\mathcal{B}_+(\tau)$.
\end{lemma}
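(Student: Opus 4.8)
Because the spectral entropy $\Phi(\X)=\tr(\X\log\X - \X)$ is a spectral function (unitarily invariant, depending only on the eigenvalues $\{\lambda_i\}$ of $\X\in\S^n_+$), the plan is to reduce the matrix statement to a scalar inequality via von Neumann's trace inequality together with standard results on convex spectral functions. Concretely, I would invoke the machinery of \cite{L95}: a spectral function $\Phi(\X)=\varphi(\lambda(\X))$ inherits convexity properties from the symmetric scalar function $\varphi$. Here $\varphi(\bm\lambda)=\sum_i(\lambda_i\log\lambda_i - \lambda_i)$, the separable negative entropy. Strong convexity with respect to $\|\cdot\|_*$ means I must show, for all $\X,\Y\in\mathcal{B}_+(\tau)$,
\begin{equation*}
\Phi(\Y)\ge\Phi(\X)+\langle\nabla\Phi(\X),\Y-\X\rangle+\frac{1}{4\tau}\|\Y-\X\|_*^2,
\end{equation*}
equivalently that $D_\Phi(\Y,\X)\ge\frac{1}{4\tau}\|\Y-\X\|_*^2$ on the ball.

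**Key steps.**
First I would recall the quantum (matrix) analogue: the Bregman divergence of the spectral entropy is exactly the \emph{quantum relative entropy} (up to the linear term), $D_\Phi(\Y,\X)=\tr(\Y\log\Y-\Y\log\X-\Y+\X)$. The scalar negative entropy $h(\lambda)=\lambda\log\lambda-\lambda$ satisfies $h''(\lambda)=1/\lambda\ge 1/\tau$ whenever $\lambda\le\tau$, so in the scalar case $h$ is $(1/\tau)$-strongly convex on $[0,\tau]$ with respect to $|\cdot|$; summing over coordinates gives $\frac{1}{\tau}$-strong convexity of $\varphi$ with respect to $\|\cdot\|_1$ on the box, hence $D_\varphi(\bm\mu,\bm\lambda)\ge\frac{1}{2\tau}\|\bm\mu-\bm\lambda\|_1^2$. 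Second, I would transfer this to the matrix level. The natural route is Pinsker-type / strong-convexity transfer for spectral functions: by Lemma 3 and the analysis in \cite{GHS20} (mirroring their treatment of the hypentropy), the nuclear-norm strong-convexity constant of the spectral function matches the $\ell_1$ strong-convexity constant of its scalar generator, because $\|\cdot\|_*$ is precisely the $\ell_1$ norm of the spectrum. The factor-of-two discrepancy (scalar constant $1/\tau$ versus the claimed $1/(2\tau)$) is absorbed into the definition of strong convexity, matching the $(2\tau)^{-1}$ in the statement.

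**The transfer step is the crux.**
The genuinely nonscalar difficulty is that eigenvalues of $\X$ and $\Y$ need not be comparable across a common basis, so one cannot naively reduce $D_\Phi(\Y,\X)$ to $D_\varphi(\lambda(\Y),\lambda(\X))$ term by term. The standard resolution uses the Hiai--Petz / Lieb-type integral representation of the relative entropy, or alternatively a direct second-order estimate: I would write $D_\Phi(\Y,\X)=\int_0^1(1-s)\,\tr\!\big((\Y-\X)\,\nabla^2\Phi(\X_s)[\Y-\X]\big)\,ds$ with $\X_s=\X+s(\Y-\X)$, and lower-bound the Hessian form. The Hessian of $\tr(\X\log\X)$ is the (operator) inverse of the Kubo--Ando logarithmic mean, and the key operator inequality is that this Hessian form dominates $\frac{1}{\|\X_s\|_2}\|\cdot\|_F^2\ge\frac{1}{\tau}\|\cdot\|_F^2$ on the ball; converting $\|\cdot\|_F$ to the appropriate norm and handling the gap between Frobenius and nuclear norms via the rank-one decomposition of $\Y-\X$ is where the argument requires care. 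I expect this operator-Hessian lower bound — making precise that the logarithmic-mean Hessian is at least $1/\tau$ times the identity superoperator in the right norm pairing — to be the main obstacle; everything else is the routine integral-remainder bookkeeping and the observation that $\|\cdot\|_*$ of a matrix equals $\|\cdot\|_1$ of its singular values.
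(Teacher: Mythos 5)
You have correctly reduced the problem to its real difficulty, but you have not solved it: the entire content of the lemma is the \emph{dimension-free} transfer from a scalar (or Frobenius-norm) estimate to a nuclear-norm estimate, and your sketch leaves exactly that step open. The primal route you propose --- write $D_\Phi(\Y,\X)$ as an integral of the Hessian quadratic form of $\tr(\X\log\X)$ and lower-bound that form --- runs into the following problem: the Daleckii--Krein formula gives the Hessian form as $\sum_{i,j}\frac{\log\lambda_i-\log\lambda_j}{\lambda_i-\lambda_j}|H_{ij}|^2$ in the eigenbasis of the base point, and the bound you state, namely domination of $\frac{1}{\|\X_s\|_2}\|\mathbf{H}\|_F^2$, only yields strong convexity with respect to the Frobenius norm. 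Converting to the nuclear norm via $\|\mathbf{H}\|_F^2\ge\frac1n\|\mathbf{H}\|_*^2$ costs a factor of $n$ and gives the useless constant $(n\tau)^{-1}$; you acknowledge that "handling the gap between Frobenius and nuclear norms" requires care, but you do not supply the argument, and it is not routine --- a term-by-term weighted Cauchy--Schwarz of the kind that works for the scalar Pinsker inequality does not directly apply because $\|\mathbf{H}\|_*$ is not $\sum_{i,j}|H_{ij}|$. Your scalar step also conflates norms: $h''(\lambda)\ge 1/\tau$ pointwise gives $\ell_2$-strong convexity, and the $\ell_1$ statement $D_\varphi(\mu,\lambda)\ge\frac{1}{2\tau}\|\mu-\lambda\|_1^2$ needs the weighted Cauchy--Schwarz argument $\|\mathbf{v}\|_1^2\le(\sum_i\lambda_i)(\sum_i v_i^2/\lambda_i)$, which you do not write down.

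The paper avoids the obstacle entirely by dualizing, which is also what the proof of the hypentropy result in the reference you cite actually does. One shows instead that the Fenchel conjugate $\Phi^*(\X)=\sum_i e^{\lambda_i(\X)}$ (computed via the identity $(g\circ\lambda)^*=g^*\circ\lambda$) is $2\tau$-smooth with respect to the spectral norm on $\nabla\Phi(\mathcal{B}_+(\tau))$. Smoothness only requires an \emph{upper} bound on a trace quadratic form: a divided-difference criterion reduces $D^2\Phi^*(\X)[\mathbf{H},\mathbf{H}]$ to $2\tr(\mathbf{H}^2 e^{\X})$, and von Neumann's trace inequality with $\|\mathbf{H}\|_2\le 1$ bounds this by $2\|\Y\|_*\le 2\tau$, with no dimension factor. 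Upper-bounding traces is exactly what von Neumann's inequality is good at, which is why the dual side is tractable and the primal side is not. If you want to complete your argument you must either carry out this duality argument, or invoke a genuinely matrix-level Pinsker-type inequality for the (unnormalized) quantum relative entropy; the scalar reduction plus a Hessian integral, as sketched, does not close.
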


\begin{proof}
The proof of Lemma \ref{lemma:strong_convexity} closely follows the proof of strong convexity of the spectral hypentropy mirror map provided in \cite{GHS20}. We first introduce some notation. We denote by $\lambda (\X)$ the vector of eigenvalues of a symmetric matrix $\X\in\S^n$. For a function $f:\R\rightarrow \R$, we denote by $f(\X)$ the standard lifting of scalar functions to symmetric matrices, see e.g.\ \cite{GHS20},
\begin{equation*}
\X = \U\operatorname{diag}[\lambda(\X)]\U^\top \qquad \Rightarrow \qquad f(\X) = \U\operatorname{diag}[f(\lambda(\X))]\U^\top,
\end{equation*}
where $f$ is applied to the vector $\lambda(\X)$ componentwise.

In order to show that the spectral entropy $\Phi$ is $(2\tau)^{-1}$-strongly convex with respect to the nuclear norm $\|\cdot\|_*$ on $\mathcal{B}_+(\tau)$, we use the duality of strong convexity and smoothness and show instead that the Fenchel conjugate $\Phi^*$ is $2\tau$-smooth with respect to the spectral norm $\|\cdot\|_2$ on the set $\nabla\Phi(\mathcal{B}_+(\tau))$.

The following Theorem from \cite{KST12} relates the conjugate of rotationally invariant matrix functions, i.e.\ functions that can be written as $\Psi(\X) = (\psi\circ\lambda)(\X)$, where $\psi:\R^n\rightarrow \R$, to the conjugate of the vector function $\psi$.
\begin{theorem}[Theorem 28 \cite{KST12}]
Let $g:\R^n\rightarrow \R$ be a symmetric function, i.e.\ invariant under permutations of its argument. Then, 
\begin{equation*}
(g\circ\lambda)^* = g^*\circ\lambda.
\end{equation*}
\end{theorem}
With this, we can compute the Fenchel conjugate of $\Phi$. We have
\begin{equation*}
\Phi(\X) = \tr(\X\log\X - \X) = \sum_{i=1}^n\lambda_i(\X)\log\lambda_i(\X) - \lambda_i(\X) =: \sum_{i=1}^n\phi(\lambda_i(\X)),
\end{equation*}
so that
\begin{equation*}
\Phi^*(\X) = \sum_{i=1}^n\phi^*(\lambda_i(\X)) = \sum_{i=1}^ne^{\lambda_i(\X)},
\end{equation*}
since the conjugate of the scalar function $\phi(x) = x\log x - x$ is given by $\phi^*(x)= e^x$.

The following Lemma from \cite{JM08} allows us to reduce the smoothness of matrix functions to the smoothness of functions taking vectors as argument. 
\begin{lemma}[Proposition 3.1 \cite{JM08}]
\label{lemma:a1}
Let $f:\R_+\rightarrow \R$ be a twice continuously differentiable function and $c>0$ a constant such that, for all $b>a> 0$,
\begin{equation*}
\frac{f'(b) - f'(a)}{b-a} \le c\frac{f''(a) + f''(b)}{2}.
\end{equation*} 
Then, the function $F:\S^n\rightarrow \R$ defined by $F(\X) = \tr(f(\X))$ is twice continuously differentiable and satisfies, for every $\mathbf{H}\in\S^n$,
\begin{equation*}
D^2F(\X)[\mathbf{H}, \mathbf{H}] \le c\tr(\mathbf{H}f''(\X)\mathbf{H}).
\end{equation*}
\end{lemma}
We can now analyze the smoothness of $\Phi^*$. By the mean-value theorem, we have for some $c\in[a, b]$,
\begin{equation*}
\frac{(\phi^*)'(b) - (\phi^*)'(a)}{b-a} = (\phi^*)''(c) \le (\phi^*)''(a) + (\phi^*)''(b).
\end{equation*}
Then, by Lemma \ref{lemma:a1}, we can bound, for any $\X = \nabla\Phi(\Y)$ with $\Y\in\mathcal{B}_+(\tau)$,
\begin{align*}
\sup_{\mathbf{H}\in\S^n: \|\mathbf{H}\|_2\le 1}D^2\Phi^*(\X)[\mathbf{H},\mathbf{H}] &\le \sup_{\mathbf{H}\in\S^n: \|\mathbf{H}\|_2\le 1} 2\tr(\mathbf{H}(\phi^*)''(\X)\mathbf{H}) \\
&= \sup_{\mathbf{H}\in\S^n: \|\mathbf{H}\|_2\le 1} 2\tr(\mathbf{H}^2(\phi^*)''(\X)) \\
&\le \sup_{\mathbf{H}\in\S^n: \|\mathbf{H}\|_2\le 1} 2\langle \sigma^2(\mathbf{H}), \sigma((\phi^*)''(\X))\rangle,
\end{align*}
where we write $\sigma(\X)$ for the vector of singular values of a matrix $\X$. The equality follows from commutativity of the trace, and the last inequality follows from von Neumann's trace inequality $\tr(\A^\top\mathbf{B})\le \langle \sigma(\A), \sigma(\mathbf{B})\rangle$. By definition, we have $\sigma_i^2(\mathbf{H})\le 1$ and $\sigma_i((\phi^*)''(\X)) = \sigma_i(\Y)$ for all $i=1,\dots,n$, so that we can bound
\begin{equation*}
\sup_{\mathbf{H}\in\S^n: \|\mathbf{H}\|_2\le 1}D^2\Phi^*(\X)[\mathbf{H},\mathbf{H}] \le 2\sum_{i=1}^n1 \cdot \sigma_i(\Y) = 2\|\Y\|_* \le 2\tau,
\end{equation*}
which completes the proof that $\Phi^*$ is $2\tau$-smooth with respect to the spectral norm on $\nabla\Phi(\mathcal{B}_+(\tau))$.
\end{proof}

Since the rest of the proof of Theorem \ref{theorem:bias_psd} follows the exact same steps as the proof of Theorem \ref{theorem:bias}, it is omitted to avoid repetition.

%%%%%%%%%%%%%%%%%%%%%%%%%%%%%%%%%%%%%%%%%%%%%%%%%%%%%
\subsection{Proof of Theorem \ref{theorem:sensing}}
  \begin{proof}[Proof of Theorem \ref{theorem:sensing}]
    We begin by considering the rectangular case and prove the bound \eqref{eq:sensing1} in Theorem \ref{theorem:sensing} for the spectral hypentropy mirror map \eqref{eq:mirror_map_hypentropy}. 
    The proof of Theorem \ref{theorem:sensing} is an adaption of and builds upon the proofs of Theorem 3.3 in \cite{RFP10} and Theorem 4 in \cite{FCRP08}. First, we need to bound the nuclear norm of the matrix $\X_\infty$. Then, we follow \cite{FCRP08,RFP10} and use the RIP-assumption to bound the deviation $\|\X_\infty - \X^\star\|_F$. 
    
    \textbf{Step 1: Bound the nuclear norm $\|\X_\infty\|_*$.}\\
    If $\|\X_\infty\|_*\le \|\X^\star\|_*$, then we have a suitable upper bound for the nuclear norm $\|\X_\infty\|_*$. Hence, assume that $\|\X_\infty\|_* > \|\X^\star\|_*$. By Theorem \ref{theorem:bias}, $\X_\infty$ minimizes the quantity in \eqref{eq:claim1} among all global minimizers of the empirical risk $f$ which, in particular, include $\X^\star$. Writing $\sigma_i$ and $\mu_i$ for the singular values of $\X_\infty$ and $\X^\star$, respectively, we can bound
    \begin{align*}
      &\sum_{i=1}^{n}\sigma_i\log\frac{\|\X^\star\|_*}{\beta} + \sigma_i\log\frac{\sigma_i + \sqrt{\sigma_i^2 + \beta^2}}{\|\X^\star\|_*} - \sqrt{\sigma_i^2 + \beta^2} \\
      \le &\sum_{i=1}^{n}\mu_i\log\frac{\|\X^\star\|_*}{\beta} + \mu_i\log\frac{\mu_i + \sqrt{\mu_i^2 + \beta^2}}{\|\X^\star\|_*} - \sqrt{\mu_i^2 + \beta^2}.
    \end{align*}
    For any $x, \beta > 0$, we have
    \begin{equation*}
      x \le \sqrt{x^2 + \beta^2} \le x + \beta.
    \end{equation*}
    Rearranging above inequality for the nuclear norm $\|\X_\infty\|_*$, we obtain the upper bound 
  \begin{align*}
    \|\X_\infty\|_* &\le \|\X^\star\|_* + \frac{1}{\log\frac{\|\X^\star\|_*}{\beta} - 1}\sum_{i=1}^{n}\mu_i\log\frac{\mu_i + \sqrt{\mu_i^2 + \beta^2}}{\|\X^\star\|_*} - \sigma_i\log\frac{\sigma_i + \sqrt{\sigma_i^2 + \beta^2}}{\|\X^\star\|_*} + \beta\\
    &\le \|\X^\star\|_* + \frac{1}{\log\frac{\|\X^\star\|_*}{\beta} - 1} \biggl(\|\X_\infty\|_*\log 2.1 - \|\X_\infty\|_*\log\frac{2\|\X_\infty\|_*}{n\|\X^\star\|_*} + n\beta\biggr) \\
    &\le \|\X^\star\|_* + \frac{1}{\log\frac{\|\X^\star\|_*}{\beta} - 1}\Biggl(\|\X_\infty\|_*\log (1.05n) + n\beta\Bigr),
  \end{align*}
  where we used the assumptions $\beta \le \frac{\|\X^\star\|_*}{1.05en}$ and $\|\X^\star\|_*< \|\X_\infty\|_*$, and for the second inequality we used the fact that the constrained optimization problem
  \begin{equation*}
    \operatorname{optimize} \sum_{i=1}^nx_i\log\biggl(x_i + \sqrt{x_i^2 + \beta^2}\biggr) \qquad \text{s.t. } \sum_{i=1}^n x_i = K, \quad x_i \ge 0 \text{ for all } i = 1,\dots, n
  \end{equation*}
   attains a maximum when $x_i = K$ for exactly one $i\in\{1,\dots,n\}$, and attains a minimum when all $x_i = K/n$ are equal. Hence, again using the assumption $\beta < \frac{\|\X^\star\|_*}{1.05en}$, we can bound 
    \begin{equation}\label{eq:bound_nucnorm}
      \|\X_\infty\|_* \le (1 + \Delta_\beta)\Biggl(\|\X^\star\|_* + \frac{n\beta}{\log\frac{\|\X^\star\|_*}{\beta} - 1}\Biggr),
    \end{equation}
    where $\Delta_\beta = (\frac{\log(\|\X^\star\|_*/\beta) - 1}{\log(1.05n)} - 1)^{-1} > 0$, since $\beta \le \frac{\|\X^\star\|_*}{1.05en}$.

    \textbf{Step 2: Bound the reconstruction error $\|\X_\infty - \X^\star\|_F$.}\\
    With this, we can now proceed as in \cite{FCRP08,RFP10}. Writing $\mathbf{R} = \X_\infty - \X^\star$, we can apply Lemma 3.4 from \cite{RFP10} to the matrices $\X^\star$ and $\mathbf{R}$ to decompose $\mathbf{R} = \mathbf{R}_0 + \mathbf{R}_c$, where $\operatorname{rank}(\mathbf{R}_0)\le 2\operatorname{rank}(\X^\star)$, $\X^\star\mathbf{R}_c^\top = \mathbf{0}$ and $(\X^\star)^\top\mathbf{R}_c = \mathbf{0}$. We can bound
    \begin{align*}
      \|\X^\star + \mathbf{R}\|_* \ge \|\X^\star + \mathbf{R}_c\|_* - \|\mathbf{R}_0\|_* = \|\X^\star\|_* + \|\mathbf{R}_c\|_* - \|\mathbf{R}_0\|_*,
    \end{align*}
    where the inequality follows from the triangle inequality, and the equality holds since $\X^\star\mathbf{R}_c^\top = \mathbf{0}$ and $(\X^\star)^\top\mathbf{R}^c = \mathbf{0}$ together imply that the nuclear norm decomposes, see e.g.\ Lemma 2.3 of \cite{RFP10}. Together with \eqref{eq:bound_nucnorm}, this implies
    \begin{equation*}
      \|\mathbf{R}_c\|_* \le \|\mathbf{R}_0\|_* + \Delta_\beta\|\X^\star\|_* + (1+\Delta_\beta)\frac{n\beta}{\log\frac{\|\X^\star\|_*}{\beta} - 1}.
    \end{equation*}
    Next, we partition $\mathbf{R}_c$ into a sum of matrices $\mathbf{R}_1,\mathbf{R}_2,\dots, \mathbf{R}_{\lceil \frac{n}{3r}\rceil}$, with each being of rank at most $3r$. Letting $\mathbf{R}_c = \U\boldsymbol{\Sigma}\V^\top$ be the singular value decomposition of $\mathbf{R}_c$, where the diagonal elements of $\Sigma$ are in non-increasing order $\sigma_1\ge\sigma_2\ge\dots\ge\sigma_n\ge 0$, define $\mathbf{R}_i = \U_{I_i}\boldsymbol{\Sigma}_{I_i}\V_{I_i}^\top$, where $I_i = \{3r(i-1) + 1,\dots,3ri\}$. By construction, we have
    \begin{equation*}
      \sigma_k \le \frac{1}{3r}\sum_{j\in I_i}\sigma_j \qquad \text{for all } k\in I_{i+1}, \; i \in \Bigl\{1,\dots,\Bigl\lceil\frac{n}{3r}\Bigr\rceil\Bigr\},
    \end{equation*}
    which implies $\|\mathbf{R}_{i+1}\|_F^2\le \frac{1}{3r}\|\mathbf{R}_i\|_*^2$. With this, we can bound
    \begin{align}
      \sum_{j\ge 2}\|\mathbf{R}_j\|_F &\le \frac{1}{\sqrt{3r}}\sum_{j\ge 1}\|\mathbf{R}_j\|_* \nonumber\\
      &= \frac{1}{\sqrt{3r}}\|\mathbf{R}_c\|_* \nonumber\\
      &\le \frac{\sqrt{2r}}{\sqrt{3r}}\|\mathbf{R}_0\|_F + \frac{\Delta_\beta\|\X^\star\|_* + (1 + \Delta_\beta)\frac{n\beta}{\log\frac{\|\X^\star\|_*}{\beta} - 1}}{\sqrt{3r}}, \label{eq:bound_rjsum}
    \end{align}
    where for the last inequality we used that $\operatorname{rank}(\mathbf{R}_0) \le 2r$.
    Since $\mathbf{R}_0 + \mathbf{R}_1$ is at most of rank $5r$, we can use the triangle inequality and the restricted isometry property to bound
    \begin{align}
      \biggl(\frac{1}{m}\sum_{i=1}^m\langle \A_i,\mathbf{R} \rangle^2\biggr)^{1/2} &\ge \biggl(\frac{1}{m}\sum_{i=1}^m\langle \A_i,\mathbf{R}_0 + \mathbf{R}_1 \rangle^2\biggr)^{1/2} - \sum_{j\ge 2}\biggl(\frac{1}{m}\sum_{i=1}^m\langle \A_i,\mathbf{R}_j \rangle^2\biggr)^{1/2} \nonumber\\
      &\ge (1-\delta) \|\mathbf{R}_0 + \mathbf{R}_1\|_F - \sum_{j\ge 2}(1+\delta)\|\mathbf{R}_j\|_F. \label{eq:bound_rip}
    \end{align}
    Since $\mathbf{R}_0$ is orthogonal to $\mathbf{R}_1$ (see Lemma 3.4 in \cite{RFP10}), we have $\|\mathbf{R}_0+\mathbf{R}_1\|_F \ge \|\mathbf{R}_0\|_F$. By definition, we have $f(\X^\star) = f(\X_\infty) = 0$, which implies $\langle\A_i,\mathbf{R}\rangle = 0$ for all $i=1,\dots,m$. Hence, we can use \eqref{eq:bound_rjsum} and rearrange \eqref{eq:bound_rip} for $\|\mathbf{R}_0 + \mathbf{R}_1\|_F$ to obtain
    \begin{equation*}
      \|\mathbf{R}_0 + \mathbf{R}_1\|_F \le \biggl(1 - \sqrt{\frac{2}{3}} - \delta\biggl(1 + \sqrt{\frac{2}{3}}\biggr)\biggr)^{-1}\bigl(1 + \delta\bigr)\frac{\Delta_\beta\|\X^\star\|_* + (1 + \Delta_\beta)\frac{n\beta}{\log\frac{\|\X^\star\|_*}{\beta} - 1}}{\sqrt{3r}}.
    \end{equation*}
    Finally, this yields
    \begin{align*}
      \|\mathbf{R}\|_F &\le \|\mathbf{R}_0 + \mathbf{R}_1\|_F + \sum_{j\ge 2}\|\mathbf{R}_j\|_F \\
      &\le 2\biggl(1 - \sqrt{\frac{2}{3}} - \delta\biggl(1 + \sqrt{\frac{2}{3}}\biggr)\biggr)^{-1}\frac{\Delta_\beta\|\X^\star\|_* + (1 + \Delta_\beta)\frac{n\beta}{\log\frac{\|\X^\star\|_*}{\beta} - 1}}{\sqrt{3r}},
    \end{align*}
    which completes the proof of the bound \eqref{eq:sensing1} in Theorem \ref{theorem:sensing}. The bound \eqref{eq:sensing2} can be shown following the same steps, and we omit the details to avoid repetition. 
  \end{proof}

\subsection{Proof of Theorem \ref{theorem:completion}}
\begin{proof}[Proof of Theorem \ref{theorem:completion}]
As in Theorem \ref{theorem:sensing}, we first consider the rectangular case and show the bound \eqref{eq:completion1} in Theorem \ref{theorem:completion}. The proof of Theorem \ref{theorem:completion} combines the ideas from and closely follows the proofs of Theorem 2 in \cite{R11} and Theorem 7 in \cite{CP10}. It was shown in Proposition 3 in \cite{R11} that it suffices to consider a setting where the entries are sampled independently and uniformly with replacement. We first introduce some notation necessary for the proof. A more detailed background on the following quantities can be found in \cite{R11}.

We use calligraphic letters to denote linear operators on matrices, for instance, we denote the identity operator by $\mathcal{I}$. We define the spectral norm of an operator as $\|\mathcal{A}\| = \sup_{\X: \|\X\|_F\le 1}\|\mathcal{A}(\X)\|_F$. Let $\Omega = \{(a_i,b_i)\}_{i=1}^m$ be a collection of indices sampled uniformly at random with replacement (possibly containing repetitions), and define the operator
\begin{equation*}
  \mathcal{R}_\Omega(\X) = \sum_{i=1}^m\langle \mathbf{e}_{a_i}\mathbf{e}_{b_i}^\top, \X\rangle \mathbf{e}_{a_i}\mathbf{e}_{b_i}^\top.
\end{equation*}
Let $\X^\star = \U\boldsymbol{\Sigma}\V^\top$ be the singular value decomposition of $\X^\star$, and let $\mathbf{u}_k$ (resp. $\mathbf{v}_k$) be the $k$-th column of $\U$ (resp. $\V$), and define the subspaces $U = \operatorname{span}(\mathbf{u}_1,\dots,\mathbf{u}_r)$ and $V = \operatorname{span}(\mathbf{v}_1,\dots,\mathbf{v}_r)$. Let $T$ be the linear space spanned by elements of the form $\mathbf{u}_k\mathbf{y}^\top$ and $\mathbf{x}\mathbf{v}_k^\top$, $k=1,\dots,r$, where $\mathbf{x}\in\R^{n}$ and $\mathbf{y}\in\R^{n'}$ are arbitrary vectors, and let $T^\bot$ be its orthogonal complement. The orthogonal projection onto the subspace $T$ is given by
\begin{equation*}
  \mathcal{P}_T(\X) = \mathbf{P}_U\X + \X\mathbf{P}_V - \mathbf{P}_U\X\mathbf{P}_V,
\end{equation*}
where $\mathbf{P}_U$ and $\mathbf{P}_V$ are the orthogonal projections onto $U$ and $V$, respectively. Then, the orthogonal projection onto $T^\bot$ is given by 
\begin{equation*}
  \mathcal{P}_{T^\bot}(\X) = (\mathcal{I} - \mathcal{P}_T)(\X).
\end{equation*}
It has been shown in \cite{R11} that, with high probability, 
\begin{equation}\label{eq:completion_proof1}
\frac{nn'}{m}\biggl\|\mathcal{P}_T\mathcal{R}_\Omega\mathcal{P}_T - \frac{m}{nn'}\mathcal{P}_T\biggr\|\le \frac{1}{2}, \qquad \|\mathcal{R}_\Omega\| \le \frac{8}{3}\sqrt{c}\log n', 
\end{equation}
and that there exists a matrix $\Y$ in the range of $\mathcal{R}_\Omega$ satisfying
\begin{equation}\label{eq:completion_proof2}
\|\mathcal{P}_T(\Y) - \U\V^\top\|_F \le \sqrt{\frac{r}{2n'}}, \qquad \|\mathcal{P}_{T^\bot}(\Y)\|_F \le \frac{1}{2},
\end{equation}
see Section 4 in \cite{R11} for a proof of these statements. 

Let $\mathbf{R} = \X_\infty - \X^\star$. Since the subspaces $T$ and $T^\bot$ are orthogonal by construction, we have
\begin{equation*}
\|\mathbf{R}\|_F^2 = \|\mathcal{P}_T(\mathbf{R})\|_F^2 + \|\mathcal{P}_{T^\bot}(\mathbf{R})\|_F^2,
\end{equation*}
so the goal is to bound the two terms on the right hand side of above identity. Since both $\X^\star$ and $\X_\infty$ are global minimizers of the empirical risk $f$, we have
\begin{equation*}
0 = \|\mathcal{R}_\Omega(\mathbf{R})\|_F \ge \|\mathcal{R}_\Omega\mathcal{P}_T(\mathbf{R})\|_F - \|\mathcal{R}_\Omega\mathcal{P}_{T^\bot}(\mathbf{R})\|_F,
\end{equation*}
where we used the reverse triangle inequality. Further, the first bound in \eqref{eq:completion_proof1} implies
\begin{equation*}
\|\mathcal{R}_\Omega\mathcal{P}_T(\mathbf{R})\|_F^2 = \langle\mathbf{R}, \mathcal{P}_T\mathcal{R}_\Omega^2\mathcal{P}_T(\mathbf{R}) \rangle \ge \langle\mathbf{R}, \mathcal{P}_T\mathcal{R}_\Omega\mathcal{P}_T(\mathbf{R}) \rangle \ge \frac{m}{2nn'}\|\mathcal{P}_T(\mathbf{R})\|_F^2,
\end{equation*}
and, using the second bound in \eqref{eq:completion_proof1}, we can bound $\|\mathcal{R}_\Omega\mathcal{P}_{T^\bot}(\mathbf{R})\|_F\le \frac{8}{3}\sqrt{c}\log(n')\|\mathcal{P}_{T^\bot}(\mathbf{R})\|_F$. Together, this implies
\begin{equation}\label{eq:completion_proof3}
\|\mathcal{P}_{T^\bot}(\mathbf{R})\|_F \ge \sqrt{\frac{9m}{128cnn'\log^2n'}}\|\mathcal{P}_T(\mathbf{R})\|_F \ge \sqrt{\frac{4.5r}{n'}}\|\mathcal{P}_T(\mathbf{R})\|_F.
\end{equation}
Recalling the variational characterization of the nuclear norm $\|\A\|_* = \sup_{\mathbf{B}:\|\mathbf{B}\|\le 1}\langle\A,\mathbf{B}\rangle$, we can choose matrices $\U_\bot$ and $\V_\bot$ such that $[\U, \U_\bot]$ and $[\V, \V_\bot]$ are orthogonal matrices and $\langle\U_\bot\V_\bot^\top, \mathcal{P}_{T^\bot}(\mathbf{R})\rangle = \|\mathcal{P}_{T^\bot}(\mathbf{R})\|_*$. Let $\Y$ be as in \eqref{eq:completion_proof2}. Then, we can bound
\begin{align*}
\|\X^\star + \mathbf{R}\|_* &\ge \langle \U\V^\top + \U_\bot\V_\bot^\top, \X^\star + \mathbf{R} \rangle \\
&= \|\X^\star\|_* + \langle \U\V^\top + \U_\bot\V_\bot^\top, \mathbf{R} \rangle \\
&= \|\X^\star\|_* + \langle \U\V^\top + \U_\bot\V_\bot^\top - (\mathcal{P}_T(\Y) + \mathcal{P}_{T^\bot}(\Y)), \mathcal{P}_T(\mathbf{R}) + \mathcal{P}_{T^\bot}(\mathbf{R}) \rangle \\
&= \|\X^\star\|_* + \langle \U\V^\top - \mathcal{P}_T(\Y), \mathcal{P}_T(\mathbf{R}) \rangle + \langle \U_\bot\V_\bot^\top - \mathcal{P}_{T^\bot}(\Y), \mathcal{P}_{T^\bot}(\mathbf{R})  \rangle \\
&\ge \|\X^\star\|_* - \sqrt{\frac{r}{2n'}}\|\mathcal{P}_T(\mathbf{R})\|_F + \frac{1}{2}\|\mathcal{P}_{T^\bot}(\mathbf{R})\|_* \\
&\ge \|\X^\star\|_* + \frac{1}{6}\|\mathcal{P}_{T^\bot}(\mathbf{R})\|_*,
\end{align*}
where the first line follows from the varitional characterization of the nuclear norm, the third line from the fact that $\Y$ and $\mathbf{R}$ are orthogonal since $\Y$ is in the range and $\mathbf{R}$ in the kernel of $\mathcal{R}_\Omega$, the fourth line from the fact that $T$ and $T^\bot$ are, by construction, orthogonal subspaces, the fifth line from the bound \eqref{eq:completion_proof2} and the definition of $\U_\bot, \V_\bot$, and the last line follows from the bound \eqref{eq:completion_proof3} and the fact that the Frobenius norm is bounded from above by the nuclear norm. 

As in the proof of Theorem \ref{theorem:sensing}, we can bound the nuclear norm
\begin{equation*}
  \|\X_\infty\|_* \le (1 + \Delta_\beta)\biggl(\|\X^\star\|_* + \frac{n\beta}{\log\frac{\|\X^\star\|_*}{\beta} - 1}\biggr),
\end{equation*}
where $\Delta_\beta = (\frac{\log(\|\X^\star\|_*/\beta) - 1}{\log(1.05n)} - 1)^{-1}$.
Hence, we can bound
\begin{equation*}
  \|\mathcal{P}_{T^\bot}(\mathbf{R})\|_F \le \|\mathcal{P}_{T^\bot}(\mathbf{R})\|_* \le 6\biggl(\Delta_\beta\|\X^\star\|_* + (1+\Delta_\beta)\frac{n\beta}{\log\frac{\|\X^\star\|_*}{\beta} - 1}\biggr).
\end{equation*}
Using \eqref{eq:completion_proof3}, we can also bound
\begin{align*}
  \|\mathcal{P}_{T}(\mathbf{R})\|_F &\le \sqrt{\frac{128cnn'\log^2n'}{9m}}\|\mathcal{P}_{T^\bot}(\mathbf{R})\|_F \\
  &\le 6\biggl(\Delta_\beta\|\X^\star\|_* + (1+\Delta_\beta)\frac{n\beta}{\log\frac{\|\X^\star\|_*}{\beta} - 1}\biggr)\sqrt{\frac{128cnn'\log^2n'}{9m}}.
\end{align*}
Putting everything together, we have
\begin{equation*}
\|\mathbf{R}\|_F \le 6\biggl(\Delta_\beta\|\X^\star\|_* + (1+\Delta_\beta)\frac{n\beta}{\log\frac{\|\X^\star\|_*}{\beta} - 1}\biggr)\biggl(1 + \biggl(\frac{128cnn'\log^2n'}{9m}\biggr)^{\frac{1}{2}}\biggr),
\end{equation*}
which completes the proof of the bound \eqref{eq:completion1} in Theorem \ref{theorem:completion}. The bound \eqref{eq:completion2} can be shown following the same steps, and we omit the details to avoid repetition. 
\end{proof}

\subsection{Proof of Proposition \ref{prop1}}
\label{appendix:proofs_prop1}
\begin{proof} We begin by showing the first part of Proposition \ref{prop1}.

  \textbf{Proof of part 1.}\\
  Recalling the expressions for the gradient of the spectral entropy mirror map $\nabla\Phi$ and its inverse $\nabla\Phi^{-1}$ provided in Appendix \ref{appendix:mirror_maps_gradients}, the mirror descent update \eqref{eq:mirror_descent} becomes
  \begin{equation*}
\X_{t+1} = \exp \bigl(\log \X_t - \eta\nabla f(\X_t)\bigr).
  \end{equation*}
  By assumption, $\X_0$ commutes with all sensing matrices $\A_i$'s, and hence also with the gradient 
  \begin{equation*}
\nabla f(\X_0) = \frac{1}{m}\sum_{i=1}^m(\langle \A_i, \X_0\rangle - y_i)\A_i,
  \end{equation*}
  which is a linear combination of the $\A_i$'s. Further, note that if two matrices $\A$ and $\mathbf{B}$ commute, then the matrices $\log \A$ and $\exp(\A)$ also commute with $\mathbf{B}$. By induction, this implies that $\log \X_t$ and $\nabla f(\X_t)$ commute for all $t\ge 0$, and we therefore have
  \begin{equation*}
    \exp\bigl(\log\X_t - \eta\nabla f(\X_t)\bigr) = \X_t \exp\bigl(-\eta\nabla f(\X_t)\bigr) = \exp\bigl(-\eta\nabla f(\X_t)\bigr) \X_t,
  \end{equation*}
  where we used the fact that $e^{\A + \mathbf{B}} = e^{\A}e^{\mathbf{B}}$ if the matrices $\A$ and $\mathbf{B}$ commute. Hence, the mirror descent update \eqref{eq:mirror_descent} is equivalent to 
  \begin{equation*}
\X_{t+1} = \frac{1}{2}\Bigl(\X_t \exp\bigl(-\eta\nabla f(\X_t)\bigr) + \exp\bigl(-\eta\nabla f(\X_t)\bigr) \X_t\Bigr),
  \end{equation*}
  which is exactly the exponentiated gradient algorithm defined in \eqref{eq:exponentiated_gradient} with initialization $\U_0 = \X_0$ and $\V_0 = \mathbf{0}$.

  \textbf{Proof of part 2.}\\
  We begin by studying the mirror descent update \eqref{eq:mirror_descent}. Recalling the expressions for the gradient of the spectral hypentropy mirror map $\nabla\Phi_\beta$ and its inverse $\nabla\Phi_\beta^{-1}$ for symmetric matrices we derived in Appendix \ref{appendix:mirror_maps_gradients}, the mirror descent update \eqref{eq:mirror_descent} becomes
  \begin{align*}
\X_{t+1} &= \nabla\Phi_\beta^{-1}\bigl(\nabla\Phi_\beta(\X_t) - \eta\nabla f(\X_t)\bigr) \\
&= \frac{\beta}{2}\Bigg[ \exp\Biggl(\log \Biggl(\frac{\X_t}{\beta} + \sqrt{\frac{\X_t^2}{\beta^2} + \mathbf{I}}\Biggr) - \eta\nabla f(\X_t)\Biggr) \\
&\qquad - \exp\Biggl(-\log \Biggl(\frac{\X_t}{\beta} + \sqrt{\frac{\X_t^2}{\beta^2} + \mathbf{I}}\Biggr) + \eta\nabla f(\X_t)\Biggr) \Bigg].
  \end{align*}
  Assuming that $\X_t$ is symmetric, we can write $\X_t = \mathbf{B}\mathbf{D}\mathbf{B}^\top$ by the spectral theorem, where $\mathbf{B}$ is an orthogonal matrix and $\mathbf{D}$ a diagonal matrix. Then, we have
  \begin{align*}
    -\log \Biggl(\frac{\X_t}{\beta} + \sqrt{\frac{\X_t^2}{\beta^2} + \mathbf{I}}\Biggr) &= -\mathbf{B}\log \Biggl(\frac{\mathbf{D}}{\beta} + \sqrt{\frac{\mathbf{D}^2}{\beta^2} + \mathbf{I}}\Biggr)\mathbf{B}^\top \\
    &= \mathbf{B}\log \Biggl(-\frac{\mathbf{D}}{\beta} + \sqrt{\frac{\mathbf{D}^2}{\beta^2} + \mathbf{I}}\Biggr)\mathbf{B}^\top \\
    &= \log \Biggl(-\frac{\X_t}{\beta} + \sqrt{\frac{\X_t^2}{\beta^2} + \mathbf{I}}\Biggr),
  \end{align*}
  since we have $(x + \sqrt{x^2 + 1})^{-1} = -x + \sqrt{x^2 + 1}$ for all $x\in\R$. Assuming that $\X_t$ (and hence also $\log (\X_t/\beta + \sqrt{(\X_t/\beta)^2 + \mathbf{I}})$) commutes with all $\A_i$'s, the mirror descent update can be written as
  \begin{align*}
\X_{t+1} &= \frac{\beta}{2}\Biggl[\Biggl(\frac{\X^2}{\beta} + \sqrt{\frac{\X^2}{\beta^2} + \mathbf{I}}\Biggr) \exp\bigl(-\eta\nabla f(\X_t)\bigr) - \Biggl(-\frac{\X^2}{\beta} + \sqrt{\frac{\X^2}{\beta^2} + \mathbf{I}}\Biggr) \exp\bigl(\eta\nabla f(\X_t)\bigr)\Biggr] \\
&= \frac{1}{2}\Bigg[\exp\Bigl(-\eta\nabla f(\X_t)\Bigr) \frac{\X_t + \sqrt{\X_t^2 + \beta^2\mathbf{I}}}{2} +  \frac{\X_t + \sqrt{\X_t^2 + \beta^2\mathbf{I}}}{2} \exp\Bigl(-\eta\nabla f(\X_t)\Bigr) \\
&\hspace{2.5em} + \exp\Bigl(\eta\nabla f(\X_t)\Bigr) \frac{-\X_t + \sqrt{\X_t^2 + \beta^2\mathbf{I}}}{2} + \frac{-\X_t + \sqrt{\X_t^2 + \beta^2\mathbf{I}}}{2} \exp\Bigl(\eta\nabla f(\X_t)\Bigr) \Bigg].
  \end{align*}
  Since $\X_0 = \mathbf{0}$ is symmetric and commutes with all $\A_i$'s, this identity inductively shows that $\X_t$ is symmetric and commutes with all $\A_i$'s for all $t\ge 0$.

  Next, consider the exponentiated gradient algorithm \eqref{eq:exponentiated_gradient} with initialization $\U_0 = \V_0 = \frac{1}{2}\beta\mathbf{I}$. Since the initializations $\U_0$ and $\V_0$ both commute with all $\A_i$'s, the update \eqref{eq:exponentiated_gradient} implies that $\U_t$ and $\V_t$ commute with all $\A_i$'s for all $t\ge 0$. Then, we have
  \begin{align*}
    \U_{t+1}\V_{t+1} = \frac{e^{-\eta\nabla f(\X_t)}\U_t + \U_t e^{-\eta\nabla f(\X_t)}}{2} \cdot \frac{e^{\eta\nabla f(\X_t)}\V_t + \V_te^{\eta\nabla f(\X_t)}}{2} = \U_t\V_t,
  \end{align*}
  that is the product $\U_t\V_t = \U_0\V_0 = \frac{1}{4}\beta^2\mathbf{I}$ stays constant for all $t\ge 0$. Since the matrix exponential of a symmetric matrix is always positive definite, the update \eqref{eq:exponentiated_gradient} also implies that $\U_t$ and $\V_t$ stay positive definite for all $t\ge 0$, so that $\U_t$ and $\V_t$ are invertible. Together with the definition $\X_t = \U_t - \V_t$, we can solve for
  \begin{equation*}
\U_t = \frac{\X_t + \sqrt{\X_t^2 + \beta^2\mathbf{I}}}{2}, \qquad \V_t = \frac{-\X_t + \sqrt{\X_t^2 + \beta^2\mathbf{I}}}{2},
  \end{equation*}
  which completes the proof that mirror descent \eqref{eq:mirror_descent} is equivalent to the exponentiated gradient algorithm \eqref{eq:exponentiated_gradient} when the sensing matrices $\A_i$'s are symmetric and commute. 
\end{proof}

\subsection{Further claims}
\label{appendix:proofs_further_claims}
In this section, we elaborate on and justify further claims made in the main paper.

First, we demonstrate that minimizing the quantity 
\begin{equation}\label{eq:claim1_helper}
  \sum_{i=1}^{n}\sigma_i\log\frac{1}{\beta} + \sigma_i\log\Bigl(\sigma_i + \sqrt{\sigma_i^2 + \beta^2}\Bigr) - \sqrt{\sigma_i^2 + \beta^2}
\end{equation}
corresponds to minimizing the nuclear norm in the limit $\beta\rightarrow 0$ and to minimizing the Frobenius norm in the limit $\beta\rightarrow \infty$, see also \cite{WGL+20}, which showed the analogous result in the vector-case.

In the limit $\beta\rightarrow 0$, the term $\log\frac{1}{\beta}$ converges to infinity, hence minimizing the quantity in \eqref{eq:claim1_helper} corresponds to minimizing the nuclear norm $\sum_{i=1}^n\sigma_i$.

In the limit $\beta\rightarrow \infty$, we can substitute $z_i = \sigma_i/\beta$ and write the expression in \eqref{eq:claim1_helper} as
\begin{equation}
  \beta \sum_{i=1}^{n}z_i\log\Bigl(z_i + \sqrt{z_i^2 + 1}\Bigr) - \sqrt{z_i^2 + 1} = \beta \sum_{i=1}^n-1 + \frac{z_i^2}{2} + \O\bigl(z_i^2\bigr),
\end{equation}
where we applied a Taylor expansion around $z_i = 0$. Hence, minimizing the quantity in \eqref{eq:claim1_helper} corresponds to minimizing the Frobenius norm $(\sum_{i=1}^n\sigma_i^2)^{1/2}$ in the limit $\beta\rightarrow\infty$.

Next, we demonstrate that gradient descent with full-rank parametrization $\X = \U\U^\top - \V\V^\top$, where $\U,\V\in\R^{n\times n}$, is a first order approximation to the exponentiated gradient algorithm defined in \eqref{eq:exponentiated_gradient}, with the step size rescaled by a factor $4$ and the approximation being exact in the limit $\eta\rightarrow 0$, i.e.\ the continuous-time algorithms are equivalent.  

First, using the first-order approximation $e^{\eta\A} = \mathbf{I} + \eta\A + \O(\eta^2)$, the exponentiated gradient algorithm becomes
\begin{equation*}
  \begin{gathered}
  \X_t = \U_t - \V_t \\
  \U_{t+1} \approx \U_t - \eta \frac{\U_t\nabla f(\X_t) + \nabla f(\X_t)\U_t}{2}, \qquad \V_{t+1} \approx \V_t + \eta\frac{\V_t\nabla f(\X_t) + \nabla f(\X_t)\V_t}{2},
  \end{gathered}
\end{equation*}
where we omitted higher order $\O(\eta^2)$ terms. On the other hand, the update for gradient descent is given by
\begin{equation*}
  \begin{gathered}
  \X_t = \U_t\U_t^\top - \V_t\V_t^\top \\
  \U_{t+1} = \U_t - 2\eta\nabla f(\X_t), \qquad \V_{t+1} = \V_t + 2\eta\nabla f(\X_t).
  \end{gathered}
\end{equation*}
With this, we can compute $\U_{t+1}\U_{t+1}^\top = \U_t + 2\eta (\U_t\nabla f(\X_t) + \nabla f(\X_t)\U_t) + \O(\eta^2)$, so gradient descent with full-rank parametrization $\X = \U\U^\top - \V\V^\top$, $\U,\V\in\R^{n\times n}$, is indeed a first-order approximation of the exponentiated gradient algorithm defined in \eqref{eq:exponentiated_gradient}, with the step size rescaled by a factor $4$. Hence, in the limit $\eta\rightarrow 0$, the differentials $\frac{d\X_t}{dt} = \lim_{\eta\rightarrow 0}\frac{\X_{t+\eta} - \X_t}{\eta}$ of the exponentiated gradient algorithm \eqref{eq:exponentiated_gradient} and gradient descent with full-rank factorized parametrization coincide.

\section{Additional experiments for matrix completion}
\label{appendix:additional_experiments}
In this section, we present additional numerical simulations which consider matrix completion, i.e.\ the sensing matrices $\A_i$'s each have exactly one random entry set to one and all other entries set to zero. The remaining exeperimental setup is as described in Section \ref{sec:numerical_simulations}, with the difference that we choose step sizes $\mu = 2000$ and $\mu = 500$ for mirror descent and gradient descent, respectively, due to the lower spectral norm of the sensing matrices $\A_i$'s in matrix completion compared to matrix sensing with random Gaussian sensing matrices. As the experiments for Figure \ref{figure1}, the experiments for Figure \ref{figure2} were implemented in Python 3.9 and took around $10$ minutes on a machine with 1.1-GHz Intel Core i5 CPU and 8 GB of RAM.
\begin{figure}[!htb]
  \centering
  \includegraphics[width=0.925\textwidth]{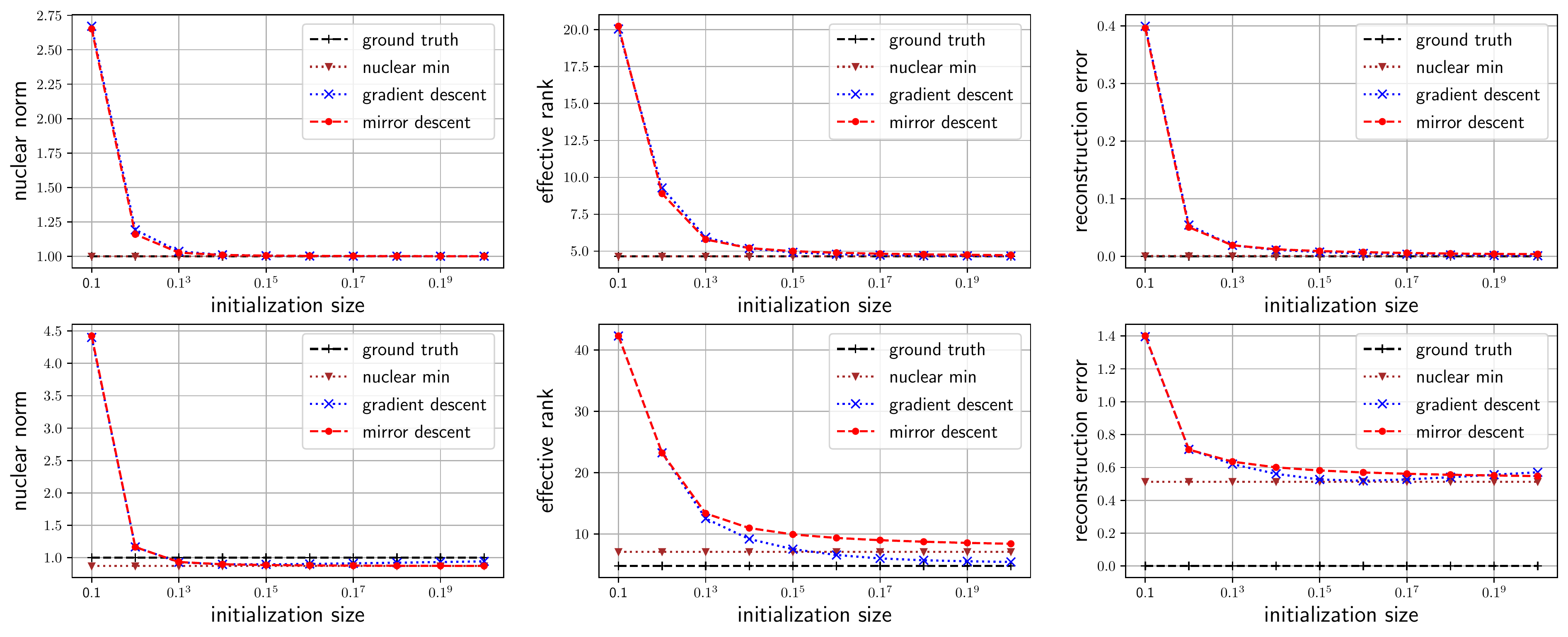}
  \caption{Nuclear norm, effective rank \cite{RV07} and reconstruction error in matrix completion against initialization size $\alpha$ for $n=50$ and $r=5$. Top row: $m=3nr$. Bottom row: $m=nr$.}\vspace{-0.5em}
  \label{figure2}
  \end{figure}

We consider the nuclear norm $\|\X\|_*$, the effective rank defined in \cite{RV07} and the reconstruction error $\|\X - \X^\star\|_F$ of the estimates from mirror descent, gradient descent and nuclear norm minimization, and compare these quantities to the ground truth $\X^\star$. Figure \ref{figure2} shows that the results in matrix completion qualitatively match the results in Figure \ref{figure1} for matrix sensing with random Gaussian sensing matrices. In particular, with $m=3nr$ observed entries (Figure \ref{figure2}, top row), nuclear norm minimization recovers the planted matrix $\X^\star$ and the estimates of mirror descent and gradient descent closely track each other in terms of the quantities considered. When only $m=nr$ entries are observed (Figure \ref{figure2}, bottom row), nuclear norm minimization does not recover the planted matrix $\X^\star$, and we observe that gradient descent puts more emphasis on lowering the effective rank at the expense of a (slightly) higher nuclear norm for initialization sizes smaller than $10^{-3}$.

\end{document}